\newtheorem{theorem}{Theorem}[section]
\newtheorem{lemma}{Lemma}[section]
\def\B{{\mathbb{B}}}
\def\I{{\mathbb{I}}}
\def\O{{\mathbb{O}}}
\def\L{{\mathbb{L}}}
\def\E{{\mathbb{E}}}
\def\R{{\mathbb{R}}}
\def\N{{\mathbb{N}}}
\def\ID{{\mathbf{I}}}
\def\A{{\mathcal{A}}}
\def\Ca{{\mathcal{C}}}
\def\M{{\mathcal{M}}}
\def\U{{\mathcal{U}}}
\def\F{{\mathcal{F}}}
\def\G{{\mathcal{G}}}
\def\D{{\mathcal{D}}}
\def\NT{{\mathcal{N}}}
\def\S{{\mathcal{S}}}
\def\Relu{{\hbox{\rm{Relu}}}}
\def\sign{{\hbox{\rm{sign}}}}
\def\diag{{\hbox{\rm{diag}}}}
\def\CE{{\hbox{\scriptsize\rm{CE}}}}
\def\d{{\hbox{\rm{d}}}}
\def\dint{{\displaystyle\int}}
\begin{document}

\title{Robust and Information-theoretically Safe Bias Classifier\\ against Adversarial Attacks\thanks{This work is partially supported by NSFC grant No.11688101 and  NKRDP grant No.2018YFA0306702.}}
\author{Lijia Yu and Xiao-Shan Gao\\
 Academy of Mathematics and Systems Science,  Chinese Academy of Sciences, \\ Beijing 100190,  China\\
 University of  Chinese Academy of Sciences,  Beijing 100049,  China\\
Email: xgao@mmrc.iss.ac.cn}
\date{}
\maketitle

\begin{abstract}
\noindent
In this paper, the bias classifier is introduced, that is, the bias part
of a DNN with Relu as the activation function is used as a classifier.
The work is motivated by the fact that
the bias part is a piecewise constant function with zero gradient and hence  cannot be
directly attacked by gradient-based methods to generate adversaries, such as FGSM.
The existence of the bias classifier is proved and an effective training method
for the bias classifier is given.
It is proved that by adding a proper random first-degree part to the bias classifier, an information-theoretically safe classifier against the original-model gradient attack is obtained
in the sense that the attack will generate a totally random attacking direction.
This seems to be the first time that the concept of information-theoretically safe classifier is proposed.
Several attack methods for the bias classifier are proposed
and numerical experiments  are used to show that
the bias classifier is more robust than DNNs with similar size against these attacks in most cases.

\vskip10pt\noindent
{\bf Keywords.}
Robust DNN,
adversarial samples,
bias classifier,
information-theoretically safe,
gradient-based attack.
\end{abstract}

\section{Introduction}
The deep neural network (DNN) \cite{lecun2015deep} has become the most powerful machine learning method, which has been successfully applied in computer vision, natural language processing,
game playing,
protein structure prediction,
and many other fields.

A major weakness of DNNs is the existence of adversarial samples~\cite{S2013}, that is, it is possible to intentionally make small modifications to an input such that human can still recognize the input clearly, but the DNN outputs a wrong label or even any label given by the adversary.
Existence of adversary samples makes the DNN vulnerable in safety-critical applications.
Although many effective methods to defend adversaries were proposed~\cite{M2017,sur1,sur2,rubust-rev1}, it was shown that adversaries are still inevitable for current DNNs~\cite{asulay1,adv-inev1}.
In ~\cite{Bast1}, it was proved that adversarial attacks
always exist for any successful DNNs under certain conditions.
In this paper, we present a new approach by using the bias part of the DNN as the classifier
and show that the bias classifier is safe against gradient-based attacks.

\subsection{Contributions}

Let $\I=[0,  1]\subset\R$ and
 $\F:\I^n\to \R^m$   a classification DNN for $m$ objects, using Relu as the activation function.
For any $x\in \I^n$,  there exist  $W_x\in\R^{m\times n}$ and $B_x\in\R^{m}$    such that $$\F(x)=W_x x+B_x$$
where $W_xx$ is called the {\em first-degree part} and $B_x$ the {\em bias part} of $\F$.
From the definition of Relu,
the bias part
$$B_\F:\I^n\to \R^m$$
defined as $B_\F(x)=B_x$ is
a piecewise constant function with a finite number of values.

The most popular and effective  methods to generate adversaries,
such as  FGSM~\cite{G2014} or PGD~\cite{M2017}, use $\frac{\nabla\F(x)}{\nabla x}$ to make the loss function bigger.
An attack on DNNs only using the values of $\F(x)$ and $\frac{\nabla\F(x)}{\nabla x}$ is called a {\em gradient-based  attack}.
Since $n$ is generally quite large, using $\frac{\nabla\F(x)}{\nabla x}$ to find
adversaries in the high-dimensional space $\R^n$ seems inevitable.

Motivated by the above observation,
the {\em bias classifier} is introduced in this paper,
that is, the bias part
$B_\F:\I^n\to \R^m$ of $\F$ is used to classify the $m$ objects.
Since $B_\F$ is a piecewise constant function, it has zero gradients
and is safe against direct gradient-based attacks.
The contributions of this paper are summarized below.

First, the existence of the bias classifier is proved.
Precisely, it is shown that for any classification problem,
there exists a DNN $\F$ such that its bias part $B_\F$ gives the correct
label  with  arbitrarily high probability.

Second, an effective training method for the bias classifier is proposed.
It is observed that the adversarial training method introduced in~\cite{M2017}
  significantly increases the classification power of the bias part.
Furthermore, using the adversarial training  to the loss function
$L_{\CE}(B_{\F}(x),  y)+\gamma L_{\CE}(\F(x), y)$
increases the classification power of the bias part
and decreases the classification power of first-degree part of $\F$,
and hence is used to train  the bias classifier.

Third, an {\em information-theoretically safe} bias classifier against gradient-based attacks is given.
A network $\F$ is called  information-theoretically safe against an attack $\A$,
if when generate an adversary for a sample $x$ with $\A$,
a random attack direction is given.
In other words,  the rate to generate adversaries with $\A$ equals
the rate of random samples to be adversaries.
Let $W\in\R^{m\times n}$ be a matrix with certain random entries,
$\F:\I^n\to\R^m$ a trained bias classifier,
and $\widetilde{\F}(x) = \F(x) + Wx$.
Then, it is shown that   $B_{\widetilde{\F}}$ is information-theoretically safe against the gradient-based attack of $\widetilde{\F}$, if the structure and parameters of $\F$ are kept secret.
The notion of {\em information-theoretically safe}
is borrowed from cryptography~\cite{it},  which means
that the ciphertext yields no information regarding the plaintext
for cyphers which are perfectly random.

Fourth, several methods to attack the bias classifier are proposed.
Experiments with MNIST and CIFAR-10 show that
the bias classifier has comparable accuracies with DNNs on the test sets
and
is more robust than DNNs of similar sizes against these adversarial attacks in most cases.

\subsection{Related work}

There exist two main approaches to obtain more robust DNNs:
using a better training method
or a better structured  DNN.
Of course, the two approaches can be combined.

Many effective methods were proposed to train more robust DNNs to defend adversaries~\cite{sur1,sur2,rubust-rev1,sur-adv}.
The adversarial training method proposed by Madry et al~\cite{M2017}
can reduce adversaries significantly,
where the value of the loss function of the worst adversary in a small neighborhood of
the training sample is minimized.
A similar approach is to generate adversaries and
add them to the training set~\cite{G2014}.
A fast adversarial training algorithm was proposed,
which improves the training efficiency by reusing the backward pass calculations~\cite{adv-li3}.
A less direct approach to  resist adversaries is to make the DNN more stable
by introducing the Lipschitz constant or $L_{p,  \infty}$ regulations of each
layer~\cite{adv-li1,  S2013, YL1, Y2018}.
Adding noises to the training data is an effective way to increase
the robustness~\cite{DL}.
Knowledge distilling is also used to enhance  robustness and defend adversaries~\cite{H2015}.
%

In this paper, the adversarial training~\cite{M2017} is
used to a new loss function to train the bias classifier.

Many effective new structures for DNNs were proposed to defend adversaries.
%
The ensembler adversarial training~\cite{adv-li2} was introduced
for  CNN models,   which can apply to large datasets such as ImageNet.
In \cite{fd}, a denoising layer  is added to each hidden layer to defend adversarial attack.
In \cite{diff}, difference-privacy noise layers are added   to defend adversaries.
In \cite{lrank}, a low-rank DNN is shown to be more robust.
In \cite{netb}, a classification-autoencoder was proposed, which is robust against
outliers and adversaries.
In \cite{pj}, it was observed that by taking average values of points
in a small neighbourhood of an input can give a larger robust region for the input.
In \cite{z1,z2}, strategies to defend adversarial attacks by modifying the input   were given.

In this paper, a new idea to obtain robust DNNs is given, that is, the
bias part is used as the classifier to avoid gradient-based attacks.
Another advantage of using the bias part as the classifier is that,
an information-theoretically safe classifier can be constructed.
Our network does not deliberately hide the gradient like the method in~\cite{not}.
Our network does not have gradient,  so the white box attack method for the gradient hiding method in~\cite{not} does not work for our model.

\vskip10pt
The rest of this paper is organized as follows.
In section \ref{sec-hnet},  the existence of the bias classifier is proved
and the training method is given.
In section \ref{sec-safe}, several attack methods for the bias classifier
are given.
In section \ref{sec-isafe},  the bias classifier is shown to be
information-theoretically safe against the original-model gradient-based attack.
In section \ref{sec-exp},  numerical experimental results are given
to show  that the bias classifier  indeed improves robustness to resist adversaries.
In section \ref{sec-conc},   conclusions are given.

\section{Bias Classifier}
\label{sec-hnet}

In this section,   we prove the existence of a DNN $\F$ such that
the bias part of $\F$ can be used as a classifier.
We also give a training algorithm for the new classifier.

\subsection{The standard DNN}

Let $\I=[0,  1]\subset\R$ and $[n] = \{1, \ldots, n\}$ for $n\in\N_{>0}$.
Let $\F:\I^n\to \R^m$ be a
classification DNN with $L$ hidden layers
and the label set $\L=[m]$.
Each hidden-layer of $\F$  uses $\Relu$ as activity functions
and the output layer does not have activity functions. We write $\F:\I^n\to\R^m$ as
\begin{equation}
\label{eq-dnn}
\begin{array}{ll}
x_0\in\I^n,  n_0=n,  n_{L+1}=m,\\
x_{l}=\Relu(W_{l}x_{l-1}+b_{l}) \in \R^{n_{l}}, l\in[L], \\
%
 \F(x_0)=x_{L+1}=W_{L+1}x_{L}+b_{L+1},\\
\end{array}
\end{equation}
where $W_{l}\in \R^{n_l\times n_{l-1}},   b_{l}\in \R^{n_l}$.
Denote $\Theta_\F=\{W_{l}, b_{l}\}_{l=1}^{L+1}$ to be the parameter set of $\F$.
Given a training set $\S$,  the network $\F$ can be trained by solving the following optimization problem with BP
\begin{equation}
\label{eq-NT}
\min_{\Theta}
\sum_{(x,  y)\in \S}L_{\CE}(\F(x),  y).
\end{equation}

For any $x\in \I^n$,   there exist  $W_x\in\R^{m\times n}$ and $B_x\in\R^{m}$,    such that $\F(x)=W_xx+B_x$.
We define the {\em first-degree part} of $\F$ to be $W_{\F}:\I^n\to \R^m$,   that is $W_{\F}(x)=W_x x$; and the {\em bias part} of $\F$ to be $B_{\F}:\I^n\to\R^m$,   that is $B_{\F}(x)=B_x$.
It is easy to see that
\begin{equation}
\label{eq-dnn1}
\F(x)=W_{\F}(x)+B_{\F}(x) = W_x x + B_x.
\end{equation}
For a label $i\in[m]$ and $x\in\I^n$,    denote $\F_i(x)$ to be the $i$-th coordinate of
$\F(x)$.

A {\em linear region} of $\F$ is a maximal connected open subset of the input space $\I^n$,   on which $\F$ is linear~\cite{G2014}.
On each linear region  $A$ of $\F$,   there exist  $W_A\in\R^{m\times n}$ and $B_A\in\R^{m}$,   such that $\F(x)=W_Ax+B_A$ for $x\in A$.
Due to the property of Relu function,  it is clear that $\F$ has a finite number of disjoint linear regions
and $\I^n$ is the union of the closures of these linear regions.

\subsection{Existence of bias classifier}
\label{sec-hnet2}
In this section,  we will prove the existence of the bias classifier.
Let $\O\subset \I^n$ be the objects to be classified.
For $x\in\O$  and $r\in\R_{>0}$,  when $r$ is small enough,    all images in
$$\B(x,  r) =\{x+ \eta\,  |\,   \eta\in\R^n,   ||\eta||< r\}$$
can be considered to have the same label with $x$.
Therefore,   the object $\O$ to be classified may be considered
as bounded open sets in $\I^n$.
This observation motivates the following existence theorem,   whose proof is given in  Appendix A.
\begin{restatable}{theorem}{gtthmE}
\label{th-E1}
Let $\O=\bigcup_{i=1}^{m}O_i\subset\I^n$ be the elements to be classified
and $\L=\{l\}_{l=1}^m$ the label set,
where $O_i\subset \I^n$ is an open set,
$O_i\bigcap O_j=\phi$ if $i\neq j$,
and $x$ has label $l$ for $x\in O_l$.
Then for any $\epsilon>0$,   there exist  a DNN $\F$ and an open set $D\subset \I^n$ with volume $V(D)<\epsilon$,
such that $B_{\F}(x)$ gives the correct label for $x\in \O\setminus D$,
that is, the $l$-th coordinate of $\F(x)$ has the biggest value for $x\in O_l$.
\end{restatable}

A network $\F$ satisfying the conditions of Theorem \ref{th-E1} gives
a {\em bias classifier} $B_\F$, which can be computed from $\F$ as follows:
%
\begin{equation}
\label{eq-bias}
B_{\F}(x)=\F(x)-W_{\F}(x)  =\F(x)- \frac{\nabla\F(x)}{\nabla x}\cdot x.
\end{equation}

\subsection{Training  the bias classifier}
\label{sec-hnet4}

In order to increase the robustness of the network, we will use
the adversarial training introduced in~\cite{M2017}, which is one of the
best practical training method to defend adversaries.
Let $(x,  y)$ be a data in the training set $\S$.
Then the adversarial training is to solve
\begin{equation}
\label{eq-AT}
\min_{\Theta}\max_{||\zeta||<\varepsilon}
\sum_{(x,  y)\in \S}L_{\CE}(\F(x+\zeta),  y)
\end{equation}
where $\varepsilon\in\R_{>0}$ is a given small real number.
In order to increase the power of the bias part $B_\F$,  we  use the following training method
{
\begin{equation}
\label{eq-HT1}
\min_{\Theta}\max_{||\zeta||<\varepsilon} \sum_{(x,  y)\in \S}
 [L_{\CE}(B_{\F}(x+\zeta),  y)+\gamma L_{\CE}(\F(x+\zeta),  y)]
\end{equation}
}
where $\gamma$ is a super parameter.
The training procedure is given in Algorithm \ref{alg-ht1}.

We first use a simple example to show that the adversarial training can increase
the classification power of $B_\F$.
The accuracies of $\F$, $W_{\F}$,  $B_{\F}$ on  the test set for
three kinds of training methods are given in Table \ref{tab-rem21}, respectively.
More comprehensive numerical experiments are given in section \ref{sec-exp}.

\begin{table}[H]
\centering
\begin{tabular}{|c|c|c|c|}
  \hline

   & $W_{\F}$  &$B_{\F}$ & $\F$ \\ \hline
  Normal training \eqref{eq-NT} & 98.80$\%$& $15.62\%$& 99.09$\%$\\
  Adversarial training \eqref{eq-AT} & 90.61$\%$& $98.77\%$ &99.19$\%$\\
 Adversarial training \eqref{eq-HT1}& 0.28$\%$& 99.09 $\%$ &99.43$\%$\\
  \hline
\end{tabular}
\caption{Accuracies of network Lenet-5 for  MNIST}
\label{tab-rem21}
\end{table}

\begin{algorithm}[H]
\caption{BCTrain}
\label{alg-ht1}
\begin{algorithmic}
\REQUIRE ~~\\
The set of training data:   $\S=\{(x_i,  y_i)\}$;\\
The initial value of the parameter set $\Theta$:  $\Theta_{0}$;\\
The super parameter: $M_s,   M_b,   M_n$.\\
\ENSURE The trained parameters $\widetilde{\Theta}$.~~\\
%
In each iteration:\\
Input $\Theta_{k}$\\
Let $L(x,  y,  \Theta)=L_{\CE}(B_{\F_{\Theta}}(x),  y)$.\\
Let $L_1(x,  y,  \Theta)=L_{\CE}(\F_{\Theta}(x),  y)$.\\
For $(x,  y)\in \S$,   do\\
\quad i=0,  $x_0=x$\\
\quad While $i<M_s$:\\
\quad\quad$x_{i+1}=x_i-M_b \frac{\partial L(x_i,  y,  \Theta_{k})}{\partial x_i}$\\
\quad\quad$i=i+1$\\
\quad $x=x_{i+1}$\\
Let $L(\Theta_k)=\frac{1}{|\S|}\sum_{(x,y)\in \S}L(x,  y,  \Theta_k)$.\\
Let $L_1(\Theta_k)=\frac{1}{|\S|}\sum_{(x,y)\in \S}L_1(x,  y,  \Theta_k)$.\\
Let $\bigtriangledown L=\frac{\partial (L(\Theta_k)+M_n L_1(\Theta_k))}{\partial \Theta_k}$.\\
Output $\Theta_{k+1}=\Theta_k+\gamma_{k}\bigtriangledown L$; $\gamma_{k}$ is the stepsize at iteration k. \\
\end{algorithmic}
\end{algorithm}

\section{Attack methods for the bias classifier}
\label{sec-safe}
In this section, several possible methods to attack the bias classifier are  given.

\subsection{Safety against gradient-based attack}
\label{sec-safe1}

The most popular methods to generate adversaries,  such as  FGSM~\cite{G2014} or PGD~\cite{M2017},  use $\frac{\nabla\F(x)}{\nabla x}$ to make the loss function bigger.
More precisely,  adversaries are generated as follows
\begin{equation}
\label{eq-gba}
x\to x+\varepsilon \sign(\frac{\nabla L_{CE}(\F(x),y)}{\nabla x})
\end{equation}
for a small parameter  $\varepsilon\in\R_{>0}$.
%
It is easy to see that,  $\frac{\nabla L_{CE}(\F(x),y)}{\nabla x}$
can be obtained from $\frac{\nabla\F(x)}{\nabla x}$.
%
So,  in the above attack,  only the values of  $\F(x)$ and
$\frac{\nabla  \F(x) }{\nabla x}$ are needed
and the detailed structure of $\F$ is not needed.
%
%
Motivated by this fact, we introduce the concept of gradient-based  attack.
A DNN model is called a {\em gradient-based  model},
if for $x\in\I^n$,  the values of $\F(x)$ and $\frac{\nabla\F(x)}{\nabla x}$
are known,  but the detailed structure of $\F$ is not known.
Correspondingly,  an attack only uses the values of $\F(x)$ and $\frac{\nabla\F(x)}{\nabla x}$ is called {\em gradient-based  attack}.

Since  the derivative of $B_\F$ is always zero,
a gradient-based  attack against $B_\F$ becomes a black-box attack,
and in this sense we say that the bias classifier is safe against the gradient-based  attack.

In the gradient-based  model,  we do not know the structure of $\F$,   but we  can calculate $B_{\F}(x)$ from $\frac{\nabla\F(x)}{\nabla x}$ using \eqref{eq-bias},
and the bias classifier still works.

\subsection{Original-model attack}
\label{sec-exp20}
An obvious attack for the bias classifier is to create
adversaries of $B_\F$ using the gradients of  $\F$, which is called {\em original-model attack}. The attack is given in Algorithm 2, where $\widehat{B}_{\F}(x)$ is  the label of $B_{\F}(x)$.

\begin{algorithm}[H]
\caption{OAttack}
\label{alg-weak}
\begin{algorithmic}
\REQUIRE ~~\\
The value of the parameter set $\Theta$ of $\F$;\\
The super parameters: $\epsilon\in\R $,  $N\in \N$;\\
A sample $x_0$ and its label $y_0$.\\
\ENSURE An adversarial sample $x_a$.\\
$x=x_0$\\
For $i=1,\ldots,N$:\\
\quad If $\widehat{B}_{\F}(x)\ne y_0$:\\
\quad\quad Break.\\
\quad $x=x+\epsilon \sign(\frac{\nabla L_{CE}(\F(x),y)}{\nabla x})$\\
If $\widehat{B}_{\F}(x)\ne y_0$, $x_a=x$  output: $x_a$\\
Output: No adversary for $x_0$\\
\end{algorithmic}
\end{algorithm}

\subsection{Correlation attack on the bias classifier}
\label{sec-safe4}
From   numerical experiments,  we have the following observations.
For a network $\F:\I^n\to\R^m$ trained with \eqref{eq-HT1} and a small vector $\epsilon\in\R^n$,
the following fact happens with high probability:
$W_{\F}(x)[l]\ge W_{\F}(x')[l]$ is valid
if and only if $B_{\F}(x)[l]\le B_{\F}(x')[l]$ is valid,
where $l\in\L$ and $x'=x+\epsilon$.
In other words,   $W_{\F}$ and $B_{\F}$ are co-related
and we thus can decrease $B_\F[l]$ by increasing $W_{\F}[l]$,
which is called the {\em correlation attack}.

In the correlation attack, we create adversaries by making
$W_{\F}(x)[y]-W_\F(x)[i]$ bigger, where $y$ is the label of $x$,  $i\in[m]$ and $i\ne y$.
The attack is given in Algorithm 3.

\begin{algorithm}[H]
\caption{CAttack}
\label{alg:Framwork}
\begin{algorithmic}
\REQUIRE ~~\\
The value of the parameter set $\Theta$ of $\F$;\\
The super parameters: $\epsilon\in\R $,   $N\in \N$;\\
A samples $x_0$ and its label $y$.\\
\ENSURE An adversarial sample $x_a$.\\
For $i\in\L$ and $i\ne y$:\\
\quad $x=x_0$,  j=0\\
\quad While $j<N$:\\
\quad\quad $U_{a}=\frac{\nabla\F_y(x)}{\nabla x}-\frac{\nabla\F_i(x)}{\nabla x}$, $x=x+\epsilon U_{a}$\\
\quad\quad If $\widehat{B}_{\F}(x)\ne y$,  break; else: j=j+1\\
If $\widehat{B}_{\F}(x)\ne y$,  $x_a=x$  output: $x_a$\\
Output: No adversary for $x_0$\\
\end{algorithmic}
\end{algorithm}

\section{Information-theoretically safety against original-model gradient-based  attack}
\label{sec-isafe}
By the original-model gradient-based attack,  we mean using the gradient of $\F$
to generate adversaries for $B_\F$.
In this section,  we show that it is possible to make
the bias classifier safe against this kind of attack.
The idea is to make $\frac{\nabla\F(x)}{\nabla x}$ random and
$B_\F$ still gives the correct classification.
%

\subsection{Information-theoretically safety}
\label{sec-isafe0}
In this section, we will define the concept of
information-theoretically safety of a DNN against an attack.
%

Let  $\F$ be a DNN defined in \eqref{eq-dnn}.
Motivated by the FGSM attack \eqref{eq-gba}, we assume that
the attack $\A(x,\F,\rho): \R^n\to\R^n$ generates an adversary of $x$ as below:
\begin{equation}
\label{eq-at00}
\A(x,{\F},\rho)=x+ \rho\, V
\end{equation}
where $\rho\in\R_{>0}$ and $V\in\{-1,1\}^n$ is the sign vector of certain quantity related with the gradient of $\F(x)$.

The attack $\A(x,{\F},\rho)$ is called
{\em information-theoretically safe}, if
$V= (\A(x,{\F},\rho)-x)/\rho$ is a random vector in $\{-1,1\}^n$
for any input $x$.

We now show how to build an information-theoretically safe bias classifier.
First train a DNN $\F:\I^n\to\R^m$ with the method in Section \ref{sec-hnet4}.
Let $W_R\in\R^{m\times n}$ satisfy
a given distribution $\M$ of random matrices in $\R^{m,n}$ and
\begin{equation}
\label{eq-gnet1}
\begin{array}{l}
\widetilde{\F}(x) =  \F(x) + W_Rx =  (W_x + W_R)x+B_x \\
B_{\widetilde{\F}}(x)=\widetilde{\F}(x)- \frac{\nabla{\widetilde{\F}(x)}}{\nabla x}\cdot x.
\end{array}
\end{equation}
%
It is easy to see that $B_{\widetilde{\F}} = B_{\F}$,
that is, the bias classifiers for $\F$ and $\widetilde{\F}$ are the same.
On the other hand, $\frac{\nabla {\widetilde{\F}}(x)}{\nabla x} =
\frac{\nabla \F(x)}{\nabla x}+W_R$ is random in certain sense.

The safety of $B_{\widetilde{\F}}$ against the  attack
$\A(x,{\widetilde{\F}},\rho)$
 can be measured by the following adversary creation rate
\begin{equation}
\label{eq-sacr0}
\begin{array}{l}
\Ca(B_{\widetilde{\F}},\A,\M)
=\E_{W_R\sim \M}[\E_{x\sim \D_{\O}}[\ID(\widehat{B}_{{\F}}(\A(x,\widetilde{\F},\rho))\ne\widehat{B}_{{\F}}(x))]]
\end{array}
\end{equation}
where $\widehat{B}_{\F}$ is the label of the classification
and $\D_{\O}$ is the distribution of the objects to be classified.

If  $B_{\widetilde{\F}}$ is
information-theoretically safe against the attack $\A(x,{\widetilde{\F}},\rho)$,
then it is easy to show that $\Ca(B_{\widetilde{\F}},\A,\M)$ equals
\begin{equation}
\label{eq-sacr1}
\begin{array}{l}
\Ca(\F,\rho)=\frac{1}{2^n} \E_{x\sim \D_{\O}}\sum_{V \in\{-1,1\}^n}[\ID(\widehat{B}_{\F}(x+\rho\,V)\ne \widehat{B}_{\F}(x))]
\end{array}
\end{equation}
which depends only on $\F$ and $\rho$ and will be used as a measure of the robustness of the bias classifier.

Note that $\Ca(\F,\rho)$ is the rate of adversaries in certain random samples.
%
In other words, if $B_{\widetilde{\F}}$ is information-theoretically safe under attack $\A$,
then the adversary creation rate of $B_{\widetilde{\F}}$ under attack $\A$
is equal to the rate of random samples to be adversaries,
which is very small as shown in section \ref{sec-exp31}.

If $B_{\widetilde{\F}}$ is not information-theoretically safe, we can use the value
$\Ca(B_{\widetilde{\F}},\A,\M)/\Ca(\F,\rho)$ to measure
the safety of $B_{\widetilde{\F}}$ relative to the information-theoretically safety.

\subsection{Safety against direct attack}
\label{sec-isafe1}
In this section, we show that  $B_{\widetilde{\F}}$ defined
in \eqref{eq-gnet1} is safe
against the direct attack~\cite{dpf} of $\widetilde{\F}$.

Let $\U(a,b)$ be the uniform distribution in $[a,b]\subset\R$.
For $\lambda\in\R_{>0}$, denote $\M_{m,n}(\lambda)$ to be the  random matrices
such that the elements of their $i$-row are in
$(\U(-2i\lambda ,-(2i-1)\lambda )\cup\U((2i-1)\lambda ,2i\lambda))^{m\times n}.$

Let $||x||_{-\infty}=\min_{i\in[n]}\{|x_i|\}$ for $x\in\R^n$.
It is easy to see that for $W_R\sim \M_{m,n}(\lambda)$,
we have $||W_{R,i}-W_{R,j}||_{-\infty}>\lambda $ for $i\ne j$,
where $W_{R,i}$ is the $i$-th row of $W_R$.

For $\rho\in\R_{>0}$,
consider the following gradient-based {\em direct attack} \cite{dpf}   for
the network ${\F}$:
\begin{equation}
\label{eq-at11}
\begin{array}{ll}
\A_1(x,{\F},\rho)=x+ \rho\, \sign(\frac{\nabla {\F}_{n_x}(x)}{\nabla x}-\frac{\nabla {\F}_y(x)}{\nabla x})\\
\end{array}
\end{equation}
where $y$ is the label of $x$
and $n_x=\arg\max_{i\ne y}\{{\F}_i(x)\}$.

\begin{theorem}
\label{th-safe1}
Let  $|\frac{\nabla \F(x)}{\nabla x}|_{\infty}<\lambda/2 $
and $W_R\in\M_{m,n}(\lambda)$.
If the structure and parameters of $\F$ are kept secret, then
$B_{\widetilde{\F}}$ is information-theoretically safe against the  attack
$\A_1(x,\widetilde{\F},\rho)$.
\end{theorem}
%
\begin{proof}
From \eqref{eq-dnn1} and \eqref{eq-gnet1},
$\frac{\nabla\widetilde{\F}(x)}{\nabla x}
=W_x+W_R$.
Let $W_{R,i}$ and $W_{x,i}$ be the $i$-rows of $W_{R}$ and $W_{x}$, respectively.
If $W_R\sim \M_{m,n}(\lambda)$, then  $||W_{R,i}-W_{R,j}||_{-\infty}>\lambda $ for $i\ne j$.
Since   $|\frac{\nabla \F(x)}{\nabla x}|_{\infty}=|W_x|_{\infty}<\lambda/2 $, we have
$||W_{x,i}-W_{x,j}||_{\infty}<\lambda $ for $i\ne j$.
Then, 
{
\begin{equation}
\label{eq-sf11}
\begin{array}{ll}
&\A_1(x,\widetilde{\F},\rho)\\
&=x+\rho\, \sign(\frac{\nabla \widetilde{\F}_{n_x}(x)}{\nabla x}-\frac{\nabla \widetilde{\F}_y(x)}{\nabla x})\\
&=x+\rho\, \sign(W_{x,n_x}-W_{x,y} + W_{R,n_x}-W_{R,y})\\
&=x+\rho\, \sign(W_{R,n_x}-W_{R,y}).\\
\end{array}
\end{equation}
}
Since $W_R\in\M_{m,n}(\lambda)$, $\widehat{W}=W_{R,n_x}-W_{R,y}$ is a random vector whose entries
having values in two intervals of the form $[-b_2,-b_1]\cup [b_1,b_2]$,
$\sign(\widehat{W})$ is a random vector in $\{-1,1\}^n$
and the theorem is proved.
\end{proof}

\subsection{Safety against FGSM attack}
\label{sec-isafe2}
In this section, we show that the result in section \ref{sec-isafe1} holds for the  FGSM attack if $m=2$.
Here is the  FGSM attack:
\begin{equation}
\label{eq-aa2}
\A_2(x,\F,\rho)=x+\rho\,\sign(\frac{\nabla L({\F}(x),y)}{\nabla x}).
\end{equation}

\begin{theorem}
\label{th-safe2}
If $|\frac{\nabla \F(x)}{\nabla x}|_{\infty}<\lambda/2$, $W_R\sim \M_{m,n}(\lambda)$,
and $m=2$, then
$B_{\widetilde{\F}}$ is information-theoretically safe against the  attack
$\A_2(x,\widetilde{\F},\rho)$.
\end{theorem}
\begin{proof}
Let $y\in\{0,1\}$ be the label of $x$.
Use the notations introduced in the proof of Theorem \ref{th-safe1}.
Since the loss function is  $L_{CE}$ and $m=2$, we have
%
\begin{equation}
\label{eq-fg11}
\begin{array}{ll}
&\frac{\nabla L(\widetilde{\F}(x),y)}{\nabla x}\\
=&\frac{\sum_{i=1}^m  {e^{\widetilde{\F}_{i}}}(W_{x,i}-W_{x,y} + W_{R,i}-W_{R,y})}{\sum_{i=1}^m  e^{\widetilde{\F}_{i}(x)}}\\
=&\frac{e^{\widetilde{\F}_{1-y}(x)}}{\sum_{i=1}^m  e^{\widetilde{\F}_{i}(x)}}
(W_{x,1-y}-W_{x,y} + W_{R,1-y}-W_{R,y}).\\
\end{array}
\end{equation}
The last equality comes from $m=2$.
Since $||W_{x,i}-W_{x,j}||_{\infty}<\lambda $ and $||W_{R,i}-W_{R,j}||_{-\infty}>\lambda$
for $i\ne j$,
we have
$\sign(\frac{\nabla L(\widetilde{\F}(x),y)}{\nabla x})
=\sign(W_{R,1-y}-W_{R,y})$ which is a random vector in $\{-1,1\}^n$,
similar to the proof of Theorem \ref{th-safe1}.
The theorem is proved.
\end{proof}

When $m>2$, we have the following result, whose proof is given in Appendix B.
\begin{restatable}{theorem}{gtthmo}
%
\label{th-safe6}
Assume $|\frac{\nabla \F(x)}{\nabla x}|_{\infty}<\mu/2$,
 $|B_{\F}(x)|_{\infty}<\beta$,
and $\lambda\in\R_{>0}$ satisfying $(\lambda-\mu)e^{-2\beta-n\mu+\sqrt{\lambda}}>(2m\lambda+\mu)m$.
Furthermore, assume the samples are normalized, that is, $|x|_{\infty}=1$.
If $W_R\sim \M_{m,n}(\lambda)$, then
$\Ca(B_{\widetilde{\F}},\A_2,\M_{m,n}(\lambda))\le (m-1)\Ca(\F,\rho)+\frac{(m-2)^2}{\sqrt{\lambda}}$.
\end{restatable}
We can choose a  large $\lambda$ to make the term $\frac{(m-2)^2}{2\sqrt{\lambda} \eta}$ small. So from Theorem \ref{th-safe6}, $B_{\widetilde{\F}}$ is approximately safe if $m$ is small.
%

\subsection{Safety against direct attack under simpler distribution}
\label{sec-isafe3}
Let $\U_{m,n}(\lambda)$ be the random matrices whose entries are in $\U(-\lambda ,\lambda )$.
In this section, we show that the result in section \ref{sec-isafe1}
is approximately valid for the simpler distribution $\U_{m,n}(\lambda)$.
We consider the $k$-step  direct attack:
{
\begin{equation}
\label{eq-A3}
\begin{array}{l}
x^{(0)}=x\\
x^{(i)}=x^{(i-1)}+\frac{\rho}{k}\,\sign(\frac{\nabla \F_{n_x}(x^{(i-1)})}{\nabla x^{(i-1)}}-\frac{\nabla \F_y(x^{(i-1)})}{\nabla x^{(i-1)}}), i\in[k]\\
\A_3(x,\F,\rho)=x^{(k)}\\
\end{array}
\end{equation}
}
where $y$ is the label of $x$  and $n_x=\arg\max_{i\ne y}\{\F_i(x)\}$.

\begin{restatable}{theorem}{gtthmS}
\label{th-safe3}
If $|\frac{\nabla \F(x)}{\nabla x}|_{\infty}<\mu/2$
and $W_R\sim \U_{m,n}(\lambda)$, then
$\Ca(B_{\widetilde{\F}},\A_3,\U_{m,n}(\lambda))\le\Ca(\F,\rho)+\mu n/\lambda$.
Furthermore, if $\lambda >n\mu/\epsilon$, then $\Ca(B_{\widetilde{\F}},\A_3,\U_{m,n}(\lambda))\le\Ca(\F,\rho)+\epsilon$
for any $\epsilon\in\R_{>0}$,
and in particular, if $\lambda >na/(\epsilon\Ca(\F,\rho))$, then $\Ca(B_{\widetilde{\F}},\A_3,\U_{m,n}(\lambda))\le(1+\epsilon)\Ca(\F,\rho)$.
\end{restatable}
Proof of Theorem \ref{th-safe3} is given in  Appendix C.
Theorem \ref{th-safe3} implies that $B_{\widetilde{\F}}$ can be made close to information-theoretically safe under attack $\A_3(x,\widetilde{\F},\rho)$.

\subsection{Safety against FGSM under simpler distribution}
\label{sec-isafe4}
In this section, we show that the result in section \ref{sec-isafe2}
is approximately valid for the simpler distribution $\U_{m,n}(\lambda)$.
Let $\A_2$ be the attack in \eqref{eq-aa2}. Then we have
\begin{restatable}{theorem}{gtthmP}
\label{th-safe4}
If $|\frac{\nabla \F(x)}{\nabla x}|_{\infty}<\mu/2$,
$W_R\sim \U_{m,n}(\lambda)$, and $m=2$, then
$\Ca(B_{\widetilde{\F}},\A_2,\U_{m,n}(\lambda))\le e^{n\mu/\lambda}\Ca(\F,\rho)$.
Furthermore, if $\lambda>n\mu/\ln(1+\epsilon)$,
then $\Ca(B_{\widetilde{\F}},\A_2,\U_{m,n}(\lambda))\le(1+\epsilon)\Ca(\F,\rho)$.
\end{restatable}

For the general $m$, we have
\begin{restatable}{theorem}{gtthmQ}
\label{th-safe5}
Assume $|\frac{\nabla \F(x)}{\nabla x}|_{\infty}<\mu/4$, $|B_{\F}(x)|_{\infty}<\beta$, 
and  $\lambda\in\R_{>0}$ satisfying $\mu e^{-2\beta-n\mu/2+\sqrt{\lambda}}>2(2\lambda+\mu)m$.
Furthermore, assume the samples are normalized, that is, $|x|_{\infty}=1$.
If $W_R\sim \U_{m,n}(\lambda)$, then
$\Ca(B_{\widetilde{\F}},\A_2,\U_{m,n}(\lambda))\le (m-1)\Ca(\F,\rho)+\frac{(m-1)n\mu}{\lambda}+\frac{(m-2)^2}{\sqrt{\lambda} }$.
\end{restatable}

Proofs of Theorems \ref{th-safe4} and \ref{th-safe5} are given in
Appendixes D and E, respectively.
Theorem \ref{th-safe4} shows that, for binary classifications,  $B_{\widetilde{\F}}$ is close to information-theoretically safe against FGSM under distribution $\U_{m,n}(\lambda)$.
Theorem \ref{th-safe5} shows that the result is approximately valid  in the general case
under certain conditions.

\section{Experiments}
\label{sec-exp}

\subsection{Accuracy of the bias classifier}
\label{sec-exp1}
In this section,  we give the accuracy of the bias classifier using the MNIST and CIFAR-10 data sets.
%
We compare two DNN models:
\begin{equation}
\label{eq-mod11}
\begin{array}{l}
\F^{(1)}:\hbox{  trained with adversarial trianing \eqref{eq-AT}}\\
\F^{(2)}:\hbox{ trained with  Algorithm \ref{alg-ht1}}
\end{array}
\end{equation}
whose detailed structure can be found in Appendix F.

We give the  accuracy on the test set (TS) and  the strong adversaries  (SA, see~\cite{netb})
and the results are given in Table \ref{tab-1}.

From the table, we can see that the bias classifier has comparable
accuracies with  $\F^{(1)}$ on the test set,
but achieves significant higher accuracies than $\F^{(1)}$ for the strong adversaries,
which implies that the bias classifier is more robust against adversaries
than DNNs of similar size and trained with adversarial training.

\begin{table}[H]
\centering
\begin{tabular}{|c|c|c|c|c|}
  \hline
  DNN & TS/MNIST  & SA/MNIST & TS/CIFAT-10  & SA/CIFAT-10
  \\\hline
  $\F^{(1)}$& 99.19$\%$ & 51.5$\%$
   & 81.23$\%$ & 19$\%$  \\
  $B_{\F^{(2)}}$& 99.12$\%$ & 87.5$\%$
  &82.84$\%$ &42$\%$\\
\hline
\end{tabular}
\caption{Accuracies for MNIST and CIFAR-10 }
\label{tab-1}
\end{table}

Moreover, for CIFAR-10, we compare the accuracy of our network and
two other networks ResNet18 and VGG19, all using adversarial training.
From the results in Table \ref{tab-41},
our network $\F^{(1)}$ performs better than ResNet18 and VGG 19.

\begin{table}[H]
\centering
\begin{tabular}{|c|c|c|}
  \hline
  DNN & Test Set& Strong Adversaries  \\\hline
  $\F^{(1)}$ & 81.23$\%$ & 19$\%$  \\
  ResNet18 &80.64$\%$ &9$\%$\\
  VGG19 & 78.92$\%$ &12$\%$\\
  \hline
\end{tabular}
\caption{Accuracies for three networks on CIFAR-10.}
\label{tab-41}
\end{table}

As pointed out in \cite{trades}, networks trained
with adversarial training usually have lower accuracies,
and the accuracies given in Tables \ref{tab-1} and \ref{tab-41} are about the
best ones for DNNs of similar sizes, which implies that the models
$\F^{(1)}$ and $\F^{(2)}$ are appropriate for MNIST and CIFAR-10.

\subsection{Robustness of the bias classifier against original-model attack}
\label{sec-oma1}
In this section,  we check the robustness of the bias classifier against the original-model attack given in Algorithm \ref{alg-weak}.
%
\subsubsection{Experimental results}
\label{sec-oma11}

We use two more networks:
$\F^{(3)}$ has the same structure with $\F^{(1)}$  given in \eqref{eq-mod11},
but trained with the first-order regulation method~\cite{Fo},
and $\F^{(4)}$ has the same structure with $\F^{(1)}$,
but trained with TRADES~\cite{trades}.
Six kinds of adversaries are used:

{\bf $l_{\infty}$ adversaries}: $1$-$i$ $(i=1,  2,  3)$. Each pixel of the sample changes at most $0.i$. PGD~\cite{M2017} is used to attack:  each step changes 0.01 and moves $10i$ steps.

{\bf $l_{0}$ adversaries}: $2$-$i$ $(i=40,  60,  80)$. Change at most $i$ pixels of the sample. JSMA~\cite{JSMA} is used to attack:  change $i$ pixels and each pixel can change up to $1$.

The adversary creation rates  are given in Tables \ref{tab-mr1} and \ref{tab-cr1}.
%
%
%
The results in the last two rows are obtained with the original-model attack.
\begin{table}[H]
\centering
\begin{tabular}{|l|c|c|c|c|c|c|}
  \hline
  DNN & 1-1 & 1-2 & 1-3 & 2-40 & 2-60 & 2-80 \\
 \hline
  $\F^{(1)}$ & $3\%$ & 17$\%$ & 55$\%$ & $55\%$ & $79\%$ & $87\%$ \\
  $\F^{(2)}$ & $4\%$ & $22\%$ & $77\%$ & $62\%$ & 82$\%$ & 90$\%$ \\
%
  $\F^{(3)}$ & $22\%$ & $78\%$ & $99\%$ & $75\%$ & 98$\%$ & 99$\%$ \\
  $\F^{(4)}$ & 4$\%$ & 15$\%$ & 53$\%$ & 62$\%$ & 77$\%$ & 88$\%$ \\
  $B_{\F^{(1)}}$ & 3$\%$ & 14$\%$ & 49$\%$ & $48\%$ & $67\%$ & $92\%$ \\
  $B_{\F^{(2)}}$ & $2\%$ & $6\%$ & $22\%$ & 41$\%$ & 56$\%$ & $79\%$ \\
  \hline
\end{tabular}
\caption{Creation rates of adversaries for MNIST}
\label{tab-mr1}
\end{table}

\begin{table}[H]
\centering
\begin{tabular}{|c|c|c|c|c|c|c|}
  \hline
  DNN &        1-1              & 1-2            & 1-3      & 2-40        & 2-60         & 2-80 \\
\hline
  $\F^{(1)}$ &        $54\%$             & $77\%$           & $90\%$       & $72\%$        & $85\%$         & 96$\%$ \\
  $\F^{(2)}$ &      $54\%$            & $72\%$           & $85\%$        & $69\%$       & $88\%$        & 97$\%$ \\
%
  $\F^{(3)}$ &      $88\%$            & $92\%$           & $99\%$        & $89\%$       & $99\%$        & 99$\%$ \\
  $\F^{(4)}$ &   $49\%$             & $73\%$           & $85\%$       & $70\%$        & 89$\%$         & 97$\%$ \\
  $B_{\F^{(1)}}$ &    $67\%$             & $70\%$           & $86\%$       & $70\%$        & $84\%$         & $91\%$ \\
   $B_{\F^{(2)}}$     &$41\%$             & $58\%$           & $77\%$       & $49\%$      & $73\%$          &84$\%$ \\
  \hline
\end{tabular}
\caption{Creation rates of adversaries for CIFAR-10}
\label{tab-cr1}
\end{table}

From the tables, we can see that the bias classifiers $B_{\F^{(2)}}$
has significant lower adversary creation rates  than all other networks.
For $l_{\infty}$ adversaries of MNIST,  $B_{\F^{(2)}}$ achieves near optimal results and the adversaries almost disappear.
For CIFAR-10,  the adversary creation rates  are still quite high  comparing to that of MNIST. We will explain the reason in Section \ref{sec-bx3}.

Also, $B_{\F^{(2)}}$ achieves much better results than $B_{\F^{(1)}}$,
which implies that our training method \eqref{eq-HT1}
is better than the usual adversarial training \eqref{eq-AT}.

\subsubsection{Influence of adversarial training on the bias classifier}
\label{sec-bx3}
In this section, we give an intuitive explanation for the results in
Tables \ref{tab-mr1} and \ref{tab-cr1}.
Let $x$ be a sample and $y$ its label.
We use PGD~\cite{M2017} to create adversaries and show how $B_{\F_y}(x)$ and $\F_y(x)$
change along with the steps of the adversarial training to explain the results in
Tables \ref{tab-mr1} and \ref{tab-cr1}.

In Figure \ref{fig-iar6},  we give the data of using the network
Lenet-5~\cite{LN5} for a sample $x$ with label $y$ in MNIST.
The blue,  orange,  green lines in the first picture
are Softmax$\F_y(x)$,   Softmax$B_{\F_y}(x)$,  Softmax$W_{\F_y}(x)$,  respectively.
The  blue,  orange,  green lines in the second picture
are $\F_y(x)$,   $B_{\F_y}(x)$,  $W_{\F_y}(x)$,  respectively.

When the blue line decreases,  we obtain  an adversary for $\F$,
which is not an adversary of $B_{\F}$,  because the orange line  does not reduce significantly.
For most samples from MNIST, the pictures are almost like this one,
and this explains why the values in lines 5-6 of Table \ref{tab-mr1} are low.

\begin{figure}[H]
\centering
\includegraphics[scale=0.3]{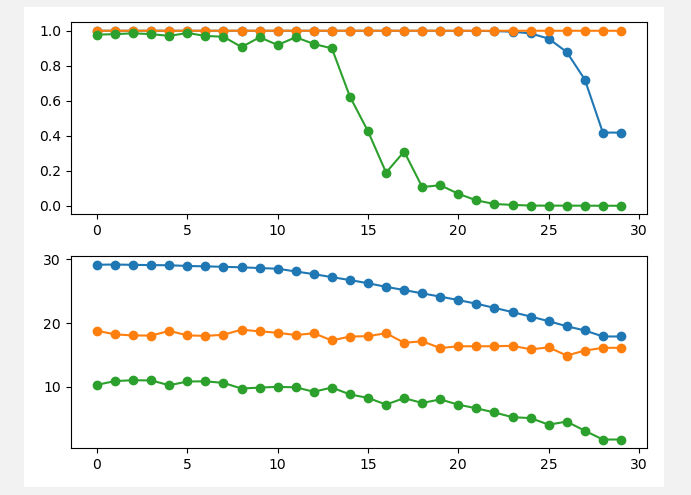}
\caption{Values of $\F_y(x)$,  $B_{\F_y}(x)$ and $W_{\F_y}(x)$  along with the adversarial training steps }
\label{fig-iar6}
\end{figure}

Similar results are given in in Figures  \ref{fig-iar8} for CIFAR-10 and network VGG-19~\cite{VGG}.
In this case,
the orange and the blue lines both decrease, and
the adversary of $\F$ is also an adversary of $B_{\F}$.
This explains why the values in lines 5-6 of Table \ref{tab-cr1} are higher than that of Table \ref{tab-mr1}.
%

\begin{figure}[H]
\centering
\includegraphics[scale=0.3]{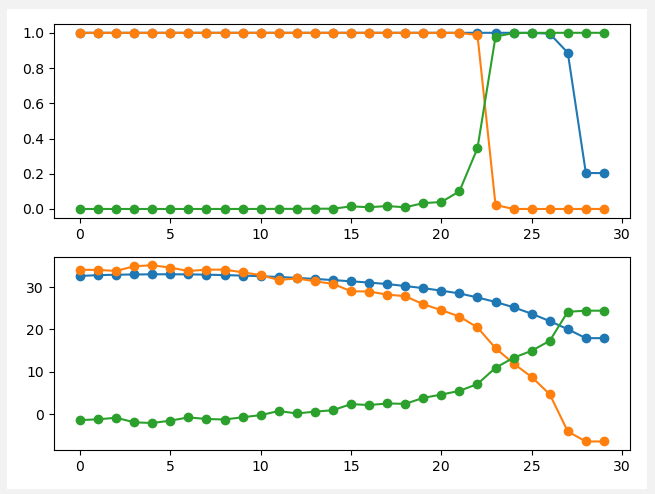}
\caption{Values of $\F_y(x)$,  $B_{\F_y}(x)$ and $W_{\F_y}(x)$  along with the adversarial training steps }
\label{fig-iar8}
\end{figure}

\subsection{Safety against original-model gradient-based  attack}
\label{sec-exp30}

In this section,  we use experimental results to validate
the results in Section \ref{sec-isafe}.

\subsubsection{Rates of adversaries in random samples}
\label{sec-exp31}

In this section,  we give the   rates of a random
point near a sample  to be an adversary.
%
%
Two ways to select random points near a sample $x$ are used:

{\parskip=2pt\parindent=10pt
R$_1$: Randomly change 60 pixels of $x$  from $b$ to $1-b$.

R$_2$: Add a random number in $[-0.2,0.2]$ to each pixel of $x$.
}

Two networks are used:

{\parskip=2pt\parindent=10pt
$\NT_1$: Lenet-5 for MNIST and VGG-19 for CIFAR-10,  with normal training \eqref{eq-NT}.

$\NT_2$: Lenet-5 for MNIST and VGG-19 for CIFAR-10,  with adversarial training \eqref{eq-AT}.
}

In Table \ref{tab-randcr1},  we give the average   rates of adversaries.
From the table,  we can see that the rates for random samples to be adversaries are
quite low for networks $\NT_2$ and $B_{\F^{(2)}}$ trained with the adversarial training.

\begin{table}[H]
\center
\begin{tabular}{|c|c|c|c|c|c|c|}
  \hline
  DNN & R$_1$/MNIST & R$_2$/MNIST &R$_1$/CIFAR-10 & R$_2$/CIFAR-10\\\hline
$\NT_1$ & 0.77$\%$ & 1.47$\%$ & 13.31$\%$ & 11.81$\%$ \\\hline
$\NT_2$ & 1.00$\%$ & 1.02$\%$ & 4.69$\%$ & 2.49$\%$ \\\hline
$B_{\F^{(2)}}$ & 0.85$\%$ & 1.64$\%$& 4.28$\%$ & 1.67$\%$\\\hline
\end{tabular}
\caption{Rates for random samples to be adversaries }
\label{tab-randcr1}
\end{table}

In Table \ref{tab-randcr101}, we give the values of $\Ca(\F,\rho)$ defined in \eqref{eq-sacr1}
for two values of $\rho$. We can see that $\Ca(\F,\rho)$ is a little bit smaller than
the values in Table \ref{tab-randcr1}, as expected.
\begin{table}[H]
\center
\begin{tabular}{|c|c|c|c|c|c|c|}
  \hline
  DNN      & $\rho=0.1$/M     & $\rho=0.2$/M     &$\rho=0.1$/C        & $\rho=0.2$/C\\\hline
$\NT_1$        & 1.00$\%$         & 1.77$\%$         & 5.26$\%$           & 9.92$\%$ \\\hline
$\NT_2$        & 0.88$\%$         & 1.01$\%$         & 1.84$\%$           & $2.04\%$ \\\hline
$B_{\F^{(2)}}$ & 0.72$\%$         & 0.97$\%$         & 1.59$\%$           & 1.71$\%$\\\hline
\end{tabular}
\caption{$\Ca(\F,\rho)$. M means MNIST, C means CIFAR-10.}
\label{tab-randcr101}
\end{table}

\subsubsection{Safety of the bias classifier}
\label{sec-exp32}

For MNIST, let $\F^{(5)}=\F^{(2)}+W_5x$, where $\F^{(2)}$ is  given in   \eqref{eq-mod11} and $W_5\in\R^{10\times 784}$ is  from $\U_{10,784}(\lambda)$ for $\lambda=100$.

For CIFAR-10, let $\F^{(6)}=\F^{(2)}+W_6x$, where $\F^{(2)}$ is    in \eqref{eq-mod11} and $W_5\in\R^{10\times 3072}$ is from $\U_{10,3072}(\lambda)$ for $\lambda=100$.

The adversary creation rates   are given in Table  \ref{tab-gbr}, where the  adversaries are introduced in Section \ref{sec-oma1}.

%

\begin{table}[H]
\centering
\begin{tabular}{|c|c|c|c|c|c|c|}
  \hline
  DNN &        1-1              & 1-2            & 1-3      & 2-40        & 2-60   &2-80     \\ \hline
%
  $B_{\F^{(5)}} $ for MNIST &    $1\%$             & $2\%$           & $2\%$       & $2\%$        & $3\%$         & $4\%$  \\
  $B_{\F^{(6)}} $ for CIFAR-10&    $19\%$             & $20\%$           & $22\%$       & $21\%$        & $22\%$         & $24\%$    \\
%
%
%
  \hline
\end{tabular}
\caption{Original-model gradient-based  attack}
\label{tab-gbr}
\end{table}

From  Table  \ref{tab-gbr},
the bias classifier is safe against the original-model gradient-based  attack for MNIST,
and the adversarial creation rates in Table \ref{tab-gbr}
are close to those in Table \ref{tab-randcr1}.
%

From  Table  \ref{tab-gbr}, the results are also near optimal for  CIFAR-10.
First,  comparing to the results  in Table \ref{tab-cr1},
the adversary creation rates are decreased  by half and are about $20\%$.
Second, from Table \ref{tab-41},  the accuracy of the bias classifier is about $82\%$.
Comparing these data, the real adversary creation rates are about $1\%-6\%$
which are just above the  rates of random samples to be adversaries  in Table
\ref{tab-randcr1}.

\subsection{Black-box attack on the bias classifier}

In this section, we use the transfer-based black-box attack~\cite{zhuanyi}
to compare four networks: $\F^{(1)}$,  $\F^{(3)}$,  $\F^{(4)}$, $B_{\F^{(2)}}$ defined in Sections \ref{sec-exp1} and \ref{sec-oma1}.

The  black-box attack for  $\F$ works as follows.
A new network $\overline{\F}$ is trained with the training set
$\{(x, \F(x))\}$ for certain samples $x$.
%
Then, we use PGD and JSMA to create adversaries for  $\overline{\F}$
and check wether they are adversaries of $\F$.
The adversary creation rates  are given in Table \ref{tab-bba1}.
We can see that, the bias classifier performs better for most
adversaries and in particular for $l_{\infty}$ adversaries.
Also, the adversary creation rates are about half of that of the original-model attack
in Tables \ref{tab-mr1} and \ref{tab-cr1}.
So the bias classifier has better robustness for the black-box attack in most cases.
\begin{table}[H]
\centering
{
\begin{tabular}{|c|c|c|c|c|c|c|}
  \hline
  DNN & 1-1 & 1-2 & 1-3 & 2-40 & 2-60 & 2-80 \\
  $\F^{(1)}$ & 1$\%$ & 2$\%$ & 18$\%$ & 28$\%$ & 35$\%$ & 40$\%$ \\
  $\F^{(3)}$ & 6$\%$ & 12$\%$ & 28$\%$ & 38$\%$ & 45$\%$ & 50$\%$ \\
  $\F^{(4)}$ & 1$\%$ & 3$\%$ & 21$\%$ & 24$\%$ & 39$\%$ & 46$\%$ \\
  $B_{\F^{(2)}}$ & 3$\%$ & 5$\%$ & 13$\%$ & 24$\%$ & 30$\%$ & 37$\%$ \\
  \hline
\end{tabular}
}
\caption{Black-box attack of MNIST}
\label{tab-bba1}
\end{table}
\begin{table}[H]
\centering
{
\begin{tabular}{|c|c|c|c|c|c|c|}
  \hline
  DNN & 1-0.1 & 1-0.2 & 1-0.3 & 2-40 & 2-60 & 2-80 \\
  $\F^{(1)}$ & 22$\%$ & 23$\%$ & 28$\%$ & 35$\%$& 36$\%$ & 41$\%$ \\
  $\F^{(3)}$ & 27$\%$ & 29$\%$ & 36$\%$ & 40$\%$& 43$\%$ & 50$\%$ \\
  $\F^{(4)}$ & 21$\%$ & 24$\%$ & 28$\%$ & 33$\%$& 39$\%$ & 44$\%$ \\
  $B_{\F^{(2)}}$ & 21$\%$ & 23$\%$ & 24$\%$ & 33$\%$ & 36$\%$ & 41$\%$ \\
  \hline
\end{tabular}
}
\caption{Black-box attack of CIFAR-10}
\label{tab-bba2}
\end{table}

\subsection{Correlation attack}
\label{sec-catt}

In this section,  it is shown that the bias classifier is safe against the
correlation attack proposed in Section \ref{sec-safe4}.
The network used here is $\F^{(2)}$ given in \eqref{eq-mod11}
and the data set is CIFAR-10.
In Table \ref{tab-8},  we give the adversary creation rates
for samples which are given the correct label by $B_{\F^{(2)}}$.
Comparing to results in Tables \ref{tab-cr1} and \ref{tab-bba2},
we can see that the bias classifier is quite safe against the correlation arrack.

\begin{table}[H]
\centering
\begin{tabular}{|c|c|c|c|c|c|c|}
  \hline
  Network &        1-0.1              & 1-0.2            & 1-0.3      & 2-40        & 2-60         & 2-80 \\
  $B_{\F^{(2)}}$ &    $4\%$             & $11\%$           & $12\%$       & $8\%$        & $17\%$         & $21\%$ \\
  \hline
\end{tabular}
\caption{Adversary creation rates for   the correlation attack}
\label{tab-8}
\end{table}

In Figure \ref{fig-cra1}, we give the attack procedure.
It can be seen that when $(W_{x,y}-W_{x,i})x$ increases $B_{x,y}-B_{x,i}$ indeed decreases,
but $B_{x,y}-B_{x,i}$ does not decrease enough to change the label,
where $y$ is the label of $x$ and $i\ne y$.

\begin{figure}[H]
\centering
\includegraphics[scale=0.3]{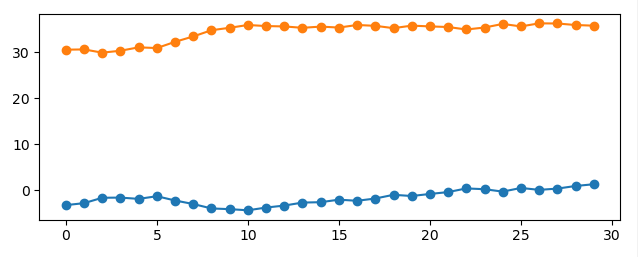}
\caption{
The input is an image for $9$ from MNIST.
The $x$-axis is the number of steps of the attack.
The blue line is $(W_{x,9}-W_{x,0})x$  and the orange line $B_{x,9}-B_{x,0}$.}
\label{fig-cra1}
\end{figure}

\subsection{Comparison with other methods }
In this section,  we compare our model with several existing models to defend adversaries. We use PGD-20 with $l_{\infty}$ bound $\epsilon=8/255$ to create adversaries on the test set of CIFAR-10.

In Table \ref{tab-com1},  we give the adversary creation rates for various attacks.
Our models are $\F^{(2)}$ in \eqref{eq-mod11} and $\F^{(6)}$ in Section \ref{sec-exp32}.
ResNet-10 \cite{Res} is used  in other cases.
The results for other networks are from the cited papers.
Gradient-based attacks  cannot be used for the bias classifier,
so we use the original-model attack given in Section \ref{sec-exp20}.
\begin{table}[H]
\centering
\begin{tabular}{|l|c|}
  \hline
 Attack Method & Adv. creation rates\\ \hline
  ADV~\cite{M2017}      & 57.1$\%$ \\
  TRADE~\cite{trades}   & 54.7$\%$ \\
  MMA~\cite{MMA}        & 62.7$\%$ \\
  FOAR~\cite{Fo}        & 67.7$\%$ \\
  SOAR~\cite{So}        & 44.0$\%$ \\
  $B_{\F^{(2)}}$ in Sec. \ref{sec-exp1} & 41.1$\%$ \\
  $B_{\F^{(6)}}$ in Sec. \ref{sec-exp32} &     $20\%$  \\
  \hline
\end{tabular}
\caption{Adversary creation rates  for CIFAR-10}
\label{tab-com1}
\end{table}

Although the DNN models and the attacks are not the same,
this comparison gives a rough idea of the performance that can be achieved for various
methods of defending adversaries.
%

From Table \ref{tab-com1}, we see that
the attack method SOAR~\cite{So} and the bias classifier $B_{\F_B^{(2)}}$ achieve the best
results for creating lower rates of adversaries, besides $B_{\F_B^{(6)}}$.
As explained in Section \ref{sec-exp32},
the optimal adversary creation rate  is about $20\%$
and is achieved by $B_{\F_B^{(6)}}$.

\subsection{Summary of the experiments}
We give a summary of the experiments in this section.

From Tables \ref{tab-1} and \ref{tab-41}, we can see that
the bias classifier achieves comparable accuracies
with DNNs of similar sizes.

From Table \ref{tab-randcr1}, we can see that
the bias classifier with a random first-degree part is safe against
gradient-based attacks, as proved in Section \ref{sec-isafe}.

From Tables \ref{tab-1}, \ref{tab-mr1}, \ref{tab-cr1}, \ref{tab-bba1}, \ref{tab-bba2}, and \ref{tab-com1},
we can see that the bias classifier is more robust than DNNs with similar sizes against adversarial attacks .

From Tables \ref{tab-cr1}, \ref{tab-bba2}, \ref{tab-8},
the original-model attack, the black-box attack, and the correlation attack
become weaker for creating adversaries, and the original model attack is the best available attack for
the bias classifier.

\section{Concluding remarks}
\label{sec-conc}

In this paper, we show that the bias part of a DNN can be effectively trained as
a classifier.
The motivation to use the bias part as the classifier is that
gradients of the DNN seems to be inevitable to generate adversaries efficiently and
the bias part of a DNN with Relu as activation functions
is a piecewise constant function with zero gradient
and is safe against direct gradient-based attacks such as FGSM.

The bias classifier can be effectively trained with
the adversarial training method~\cite{M2017},
which increases the classification power of the bias part
and  decreases the classification power of first-degree part.
Experimental results are used to show the robustness of the bias classifier
over the standard DNNs

Further, by adding a random first-degree part to the bias classifier,
an information-theoretically safe classifier against gradient-based attacks is obtained,
that is, the adversary creation rate
is almost the same as the rate of certain random samples to be adversaries.

For further research, the estimations in Theorems \ref{th-safe6}, \ref{th-safe4}, \ref{th-safe5}
are not optimal, and better estimations are desirable.

\newpage

\section*{Appendix}
\subsection*{Appendix A. Proof of Theorem \ref{th-E1}}
\setcounter{section}{2}
\gtthmE*
We first prove several lemmas. In this section,
the notations $\O,  O_l,  \L$ introduced in Theorem \ref{th-E1} will be used.

Let $\Gamma:\R\rightarrow\R$ be another activation function:
$$\Gamma(x)=
%
%
\left\{
  \begin{array}{lll}
    0 & \hbox{ if }&  x\le 0 \\
    1 & \hbox{ if }& x>0 \\
  \end{array}
\right.
.$$

\begin{lemma}[Theorem 5 in \cite{G1989}]
\label{zl-1}
Let $l\in\L$ and $F_l:\R\to\R$ be a function such that $F_l(x)=1$ if $x\in O_l$
and $F_l(x)=-1$ otherwise.
Then for any $\epsilon>0$,   there exist $N\in \N_{>0}$,   $W\in \R^{N\times n}$,   $b\in \R^{N}$,   $U\in \R^{1\times N}$,   and an open set $D\subset  \I^n$ with $V(D)<\epsilon$,
such that
$$G(x)=U\cdot \Gamma(W x+b):\I^n\to\R$$
and $|G(x)-F_l(x)|<\epsilon$ for $x\in \O\setminus D$.
\end{lemma}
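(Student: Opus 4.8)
The plan is to exploit that the difficulty lies entirely in the target rather than in the network: $F_l$ jumps across $\partial O_l$, while any $G(x)=U\cdot\Gamma(Wx+b)$ is piecewise constant off finitely many hyperplanes and hence cannot be uniformly close to a discontinuous function on all of $\O$ — which is exactly why the exceptional set $D$ is needed. So I would argue in two stages: (1) replace $F_l$ by a \emph{continuous} surrogate $\widetilde{F}_l$ that agrees with $F_l$ off a set of volume $<\epsilon/2$; (2) uniformly approximate $\widetilde{F}_l$ on $\I^n$ by a single-hidden-layer step network, which is the classical universal approximation statement quoted as Theorem~5 of \cite{G1989}.

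For stage (1) I would use inner regularity of Lebesgue measure: for each $j\in[m]$ pick a compact $C_j\subset O_j$ with $V(O_j\setminus C_j)<\epsilon/(2m)$, and set $D=\bigcup_{j=1}^m(O_j\setminus C_j)$, so that $V(D)<\epsilon/2$ and $\O\setminus D=\bigcup_{j=1}^m C_j$. The sets $C_l$ and $\bigcup_{j\neq l}C_j$ are disjoint compact sets, so Urysohn's lemma produces a continuous $\widetilde{F}_l:\I^n\to[-1,1]$ equal to $1$ on $C_l$ and to $-1$ on $\bigcup_{j\neq l}C_j$; in particular $\widetilde{F}_l(x)=F_l(x)$ for every $x\in\O\setminus D$. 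It is essential that each $O_j$ be approximated from the \emph{inside}, since this is what confines the remaining discrepancy to a neighbourhood of the boundaries $\partial O_j$, whose total volume contribution has already been bounded by $\epsilon/2$.

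For stage (2) I would invoke the single-layer universal approximation theorem with step activation to obtain $N,W,b,U$ with $\|U\cdot\Gamma(Wx+b)-\widetilde{F}_l\|_\infty<\epsilon/2$ on $\I^n$; if a self-contained derivation is wanted, the mechanism is standard: trigonometric polynomials are uniformly dense in $C(\I^n)$ (Stone--Weierstrass, after a routine periodic extension to a slightly larger cube), each summand is a ridge function $x\mapsto g(\alpha^{\mathsf T}x)$, a continuous $g$ on the compact interval $\{\alpha^{\mathsf T}x:x\in\I^n\}$ is uniformly approximated by a staircase $t\mapsto\sum_k u_k\Gamma(t-\beta_k)$, and substituting $t=\alpha^{\mathsf T}x$ while stacking all the resulting affine forms into $W,b$ and the weights into $U$ produces a $G$ of exactly the required shape. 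Combining the two stages, for every $x\in\O\setminus D$,
\[
|G(x)-F_l(x)|\le |G(x)-\widetilde{F}_l(x)|+|\widetilde{F}_l(x)-F_l(x)|<\tfrac{\epsilon}{2}+0<\epsilon .
\]

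The only genuinely non-soft ingredient is the multivariate uniform approximation of stage (2); everything else is measure-theoretic bookkeeping. The point to be careful about with the constants is that $\widetilde{F}_l$ agrees with $F_l$ \emph{exactly} on $\O\setminus D$, so no slack is lost in the triangle inequality above and it is enough to arrange $V(D)<\epsilon/2$ and $\|G-\widetilde{F}_l\|_\infty<\epsilon/2$. Should one wish to avoid citing \cite{G1989} for stage (2), the staircase construction sketched above is elementary, the only mildly technical step being the periodic extension behind the density of trigonometric polynomials on a cube.
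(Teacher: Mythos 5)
Your argument is correct, but note that the paper offers no proof of this lemma at all: it is quoted as Theorem~5 of Cybenko \cite{G1989}, whose own derivation runs through $L^1$-density of step-activation ridge sums followed by a Chebyshev-type argument (the set where the pointwise error exceeds $\epsilon$ must have small measure because the $L^1$ error is small). Your route is genuinely different and self-contained: you localize the discontinuity of $F_l$ first, replacing it by a continuous Urysohn surrogate $\widetilde{F}_l$ that agrees with $F_l$ exactly off a set $D$ of volume $<\epsilon/2$ obtained by inner regularity, and only then invoke uniform approximation of a continuous function by a staircase network. Both stages are sound: the compacta $C_j\subset O_j$ are pairwise disjoint because the $O_j$ are, so Urysohn applies, and a continuous function on $\I^n$ can indeed be uniformly approximated by $U\cdot\Gamma(Wx+b)$ via the Stone--Weierstrass/ridge-function/staircase chain you describe (the approximation of a continuous univariate $g$ by $\sum_k u_k\Gamma(t-\beta_k)$ holds at every point, including the jump points, by uniform continuity). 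Two small remarks: the statement's ``$F_l:\R\to\R$'' is a typo for $F_l:\I^n\to\R$, which you silently and correctly repair; and your construction gives slightly more than is claimed, namely $|G-\widetilde{F}_l|<\epsilon/2$ on all of $\I^n$, with the exceptional set $D$ needed only to reconcile $\widetilde{F}_l$ with $F_l$, whereas in Cybenko's argument $D$ is where the network itself fails. What your approach buys is independence from \cite{G1989}; what the citation buys is brevity and the extra generality of arbitrary measurable (not merely open) classes.
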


The following lemma shows that there exists a DNN with one hidden layer
and using $\Gamma$ as the activation function,  which can be used as a classifier
for $\O$.
\begin{lemma}
\label{z-1}
For any $\epsilon>0$,   there exist $N\in \N_{>0}$,   $W\in \R^{N\times n}$,   $b\in \R^{N}$,   $U\in \R^{m\times N}$, and an open set $D\subset  \I^n$ with $V(D)<\epsilon$,
such that
$$\G(x)=U \cdot \Gamma(W x+b):\I^n\to \R^m$$
gives the correct label for $x\in \O\setminus D$.
\end{lemma}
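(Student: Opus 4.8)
The plan is to build the classifier $G$ by stacking the $m$ scalar approximators provided by Lemma \ref{zl-1}, one for each label, and then fixing up the values on a small exceptional set.

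First I would invoke Lemma \ref{zl-1} separately for each $l\in\L$: this gives, for each label, a width $N_l$, matrices $W^{(l)}\in\R^{N_l\times n}$, $b^{(l)}\in\R^{N_l\times 1}$, $U^{(l)}\in\R^{1\times N_l}$, and a bad set $D_l\subset\I^n$ with $V(D_l)<\epsilon/m$, such that $G_l(x)=U^{(l)}\cdot\Gamma(W^{(l)}x+b^{(l)})$ satisfies $|G_l(x)-F_l(x)|<\delta$ for $x\in\O\setminus D_l$, where I take the approximation parameter $\delta$ small (say $\delta=1/2$) and $F_l$ is $+1$ on $O_l$ and $-1$ elsewhere. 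Now set $N=\sum_{l=1}^m N_l$, let $W\in\R^{N\times n}$ be the vertical stacking of the $W^{(l)}$, let $b$ be the stacking of the $b^{(l)}$, and let $U\in\R^{m\times N}$ be the block-diagonal matrix whose $l$-th row is $U^{(l)}$ placed in the columns corresponding to block $l$ (zeros elsewhere). Then $G(x)=U\cdot\Gamma(Wx+b)$ has $l$-th coordinate exactly $G_l(x)$.

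Next I would take $D=\bigcup_{l=1}^m D_l$, so $V(D)\le\sum_l V(D_l)<\epsilon$. For $x\in O_l\setminus D$ we have $x\notin D_k$ for every $k$, so simultaneously $G_l(x)>F_l(x)-\delta = 1-\delta$ and, for every $k\neq l$, $G_k(x)<F_k(x)+\delta=-1+\delta$ (since $x\notin O_k$ because the $O_i$ are disjoint). With $\delta=1/2$ this gives $G_l(x)>1/2>-1/2>G_k(x)$ for all $k\neq l$, so the $l$-th coordinate of $G(x)$ is strictly the largest, which is exactly the claimed property on $O_l\setminus D$. Since $\O\setminus D=\bigcup_l(O_l\setminus D)$, this covers all of $\O\setminus D$.

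I do not expect a serious obstacle here; the lemma is essentially a packaging of Lemma \ref{zl-1}. The only point requiring a little care is the bookkeeping that the block-diagonal construction of $U$ really does make the $l$-th coordinate of $U\cdot\Gamma(Wx+b)$ equal to $U^{(l)}\cdot\Gamma(W^{(l)}x+b^{(l)})$ — this is immediate once one notes $\Gamma$ acts coordinatewise so $\Gamma(Wx+b)$ is the stacking of the $\Gamma(W^{(l)}x+b^{(l)})$ — and the choice of a uniform gap parameter $\delta<1$ so that the $\pm1$ targets stay separated after perturbation. A minor cosmetic issue is that Lemma \ref{zl-1} is stated with the fixed tolerance $\epsilon$ playing both the role of approximation error and volume bound; I would just apply it with its parameter set to $\min(1/2,\epsilon/m)$.
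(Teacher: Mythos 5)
Your proposal is correct and follows essentially the same route as the paper's proof: apply Lemma \ref{zl-1} once per label to get scalar approximators $G_l$, stack the hidden-layer weights and biases, assemble a block-diagonal output matrix so that the $l$-th coordinate of $G$ equals $G_l$, union the exceptional sets, and conclude from the $\pm 1$ gap. Your bookkeeping is in fact slightly cleaner than the paper's (you allow distinct widths $N_l$ and budget volume as $\epsilon/m$ per label, whereas the paper's write-up fixes a common $N_a$ and writes $\epsilon/n$), but the underlying argument is the same.
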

\begin{proof}
By Lemma \ref{zl-1},   for $l\in\L=[m]$,  there exist
$N_a\in \N_{>0}$,
$W_l\in \R^{N_a\times  n}$,
$b_l\in \R^{N_a}$,
$U_l \in \R^{1\times  N_a}$,
and $D_l\subset \I^n$ with $V(D_l)<\epsilon/m$
such that
$$G_l(x)=U_l  \cdot \Gamma(W_l x+b_l)
\hbox{ and }
|G_l(x)-F_l(x)|<\epsilon$$
for $x\in \O\setminus D_l$, where $F_l$ is defined in Lemma \ref{zl-1}.
%

Let $N=N_a m$, $W\in \R^{N\times n}$,   $b\in \R^{N}$,
where the $l$-th row of $W$ is the $l_2$-th row of $W_{l_1}$ and
the $l$-th row of $b$ is the $l_2$-th row of $b_{l_1}$,
where  $l=l_1 N_a+l_2$,   $0\le l_2< N_a$, and $0\le l_1< m$.

Let $U\in \R^{m\times  N}$ be formed as follows:
for $j\in [m]$,   the $j$-th row of $U$ are zeros except the
$(j-1)N_a$-th to the $((j-1)N_a+N_a-1)$-th rows,
and the values of the $((j-1)N_a+k)$-th place of the $j$-th row of
$U$ equal to the values of the $k$-th place of $U_l$,  where $k=0,1,\ldots,N_a-1$.

Now we have $N=N_am\in \N_{>0}$,   $W\in \R^{N\times  n}$,   $b\in \R^{N}$,   $U\in \R^{m\times  N}$, and
$$\G(x)=U\cdot  \Gamma(W x+b)$$
satisfies  $\G(x)_l$=$G_l(x)$,   where $\G(x)_l$ is the $l$-th coordinate of $\G(x)$.
Let $D=\bigcup_{i=1}^m D_l\subset \I^n$ with $V(D)<\epsilon$.
Then  $\G(x)_y>1-\epsilon$ and $\G(x)_l<-1+\epsilon$ for $x\in O_y$ and $l\ne y$.
Since $\epsilon$ can be as small as possible,
we have $\G(x)_y>\G(x)_l$ for $l\ne y$,
and  $\G(x)$ give label $y$ for $x\in O_y\setminus D$.
Hence, $\G(x)$ gives the correct label for $x\in \O\setminus D$.
\end{proof}

\begin{lemma}
\label{zbc-2}

Let $W\in\R^{1\times n}$ have nonzero entries and $b\in\R$.
For any $a>0$, let $Z_a=\{x\in \I^n\,|\,|Wx+b|<a\}$.
Then  $V(Z_a)\le2a\sqrt{n}^{n-1}/||W||_2$.
\end{lemma}
\begin{proof}
Let $U=\{U_1,\ldots,U_n\}$ be a unit orthogonal basis of $\R^n$ and $U_1=\frac{W}{||W||_2}$.
If $T_i=\max_{x,y\in Z_a}\{\langle x-y,U_i\rangle\}$, then we have $V(Z_a)\le\Pi_{i=1}^{n}T_i$.

For $i>1$, we have $T_i\le \max_{x,y\in Z_a}||x-y||_2\le \sqrt{n}$.
Moreover, for any $x,y\in Z_a$,  we have
\begin{equation*}
\begin{array}{ll}
&\langle x-y,U_1\rangle\\
=&\langle x-y,W\rangle/||W||_2\\
=&(Wx+b-Wy-b)/||W||_2\\
\le& 2a/||W||_2\\
\end{array}
\end{equation*}
which means $T_1\le 2a/||W||_2$. Then we have
$$V(Z_a)\le\Pi_{i=1}^{n}T_i\le2a\sqrt{n}^{n-1}/||W||_2.$$
The lemma is proved.
\end{proof}

\begin{lemma}
\label{zl-2}
The bias vector $b$  in Lemma \ref{z-1} can be chosen to consist of nonzero values.
\end{lemma}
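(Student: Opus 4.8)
The plan is to keep $W$ and $U$ from Lemma~\ref{z-1} untouched and to move only the \emph{zero} entries of $b$ by a tiny amount, chosen so that the output of the $\Gamma$-network changes on a set of arbitrarily small volume, which can then be absorbed into the exceptional set $D$.

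First I would invoke Lemma~\ref{z-1} with $\epsilon/2$ in place of $\epsilon$, obtaining $N$, $W$, $b$, $U$ and $D\subset\I^n$ with $V(D)<\epsilon/2$ such that $G(x)=U\cdot\Gamma(Wx+b)$ labels $\O\setminus D$ correctly. Let $Z=\{i\in[N]:b_i=0\}$ and let $w_i\in\R^{1\times n}$ be the $i$-th row of $W$. I would split $Z$ into $Z_0=\{i\in Z:w_i=0\}$ and $Z_1=Z\setminus Z_0$, and define a new bias $b'$ by $b_i'=b_i$ for $i\notin Z$, $b_i'=-1$ for $i\in Z_0$, and $b_i'=\delta_i$ for $i\in Z_1$, where $0<|\delta_i|$ is to be fixed small. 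By construction $b'$ then has only nonzero entries.

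Next I would verify that each change is harmless away from a thin set. For $i\in Z_0$ the pre-activation is the constant $b_i$, and since $\Gamma$ vanishes on all of $(-\infty,0]$, replacing $0$ by $-1$ keeps $\Gamma((Wx+b')_i)=0=\Gamma((Wx+b)_i)$ for every $x$. For $i\in Z_1$, the inequality $\Gamma(w_ix)\ne\Gamma(w_ix+\delta_i)$ forces $w_ix$ into an interval of length $|\delta_i|$ having $0$ as an endpoint, so the set $S_i$ of $x\in\I^n$ where neuron $i$ flips is contained in a slab of $\I^n$ of thickness $|\delta_i|/\|w_i\|_2$, whose volume is $O(|\delta_i|)$ because $w_i\ne 0$; hence $V(S_i)\to 0$ as $\delta_i\to 0$. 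I would choose the $\delta_i$ small enough that $\sum_{i\in Z_1}V(S_i)<\epsilon/2$, set $D'=D\cup\bigcup_{i\in Z_1}S_i$, so that $V(D')<\epsilon$. For $x\notin D'$, every neuron has the same post-activation value under $b$ and under $b'$ (trivially for $i\notin Z$, by the previous remark for $i\in Z_0$, and because $x\notin S_i$ for $i\in Z_1$); since the neurons act independently, $U\cdot\Gamma(Wx+b')=G(x)$ on $\O\setminus D'$, so the network with bias $b'$ still gives the correct label there, which is the assertion.

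The one point needing care is the dead rows $w_i=0$: perturbing such a $b_i$ upward would switch the neuron on over all of $\I^n$ and could destroy the classification, so the perturbation there must be downward, which is exactly what $\Gamma$ being identically $0$ on $(-\infty,0]$ permits; for every other zero entry of $b$ the only fact used is the routine observation that nudging a threshold by $\delta_i$ alters the neuron's output only on an $O(|\delta_i|)$ slab. I do not expect any genuine obstacle beyond this bookkeeping.
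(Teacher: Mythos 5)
Your proof is correct and follows the same underlying strategy as the paper's own argument: perturb the zero entries of $b$ by a small amount, observe that each affected neuron changes its $\Gamma$-output only on a thin slab of $\I^n$, and absorb the union of these slabs into the exceptional set $D$. The one place you diverge is your explicit treatment of dead rows, and this is in fact a genuine repair of a gap rather than mere bookkeeping. The paper shifts \emph{every} zero entry of $b$ by the same $+\gamma$ and bounds the $i$-th flip set $Z_i=\{x: |W^i x+b^i|<\gamma\}$ uniformly by $2\gamma C_n$; that estimate silently assumes $W^i\ne 0$ (and, for the specific constant $2\gamma C_n$, also a lower bound on $\|W^i\|_2$). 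When $W^i=0$ and $b^i=0$ the shifted neuron goes from $\Gamma(0)=0$ to $\Gamma(\gamma)=1$ at \emph{every} $x$, so $Z_i$ is all of $\I^n$ and the slab estimate collapses. Your version — push a dead row's bias to any negative value, exploiting $\Gamma\equiv 0$ on $(-\infty,0]$, and only apply the slab estimate to rows with $w_i\ne 0$, where the constant depends on the finitely many positive values $\|w_i\|_2$ — is the argument the paper needs.
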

\begin{proof}
By Lemma \ref{z-1},   there exist
$N\in \N_{>0}$,   $W\in \R^{N\times  n}$,   $b\in \R^{N}$,   $U\in \R^{m\times  N}$,
 and $D_1\subset \I^n$ with $V(D_1)<\epsilon/2$,   such that
$$\G(x)=U\cdot \Gamma(W x+b)$$
gives the correct label for $x\in \O\setminus D_1$.

Let $\gamma=\frac{\epsilon W_m}{4N \sqrt{n}^{n-1}}$, where $W_m=||W||_{2,\infty}$.
Assume $\widetilde{b}=b-I_0(|b|) \gamma$, where $I_0(x)=1-\sign(x)$ and
when $I_0$ is treated as a map of a vector,
it acts on each entry of the vector, respectively.
From the construction, $\widetilde{b}$ does not have zero entries, because $\widetilde{b}_i=b_i$ if $b_i\ne0$, and $\widetilde{b}_i=\gamma$ if $b_i=0$,
where $b_i$ and $\widetilde{b}_i$ are respectively the $i$-th rows of $b$ and $\widetilde{b}$.

Let $W_i$ be the $i$-th row of $W$ and  $Z_i=\{z\in\R^n\,|\,|W_i z+b_i|<\gamma\}$.
By Lemma \ref{zbc-2}, we have $V(Z_i\bigcap \I^n)<2\gamma\sqrt{n}^{n-1}/W_m$. We write $C_n=\sqrt{n}^{n-1}/W_m$.

Let $Z=\{x\in\R^n\,|\,\Gamma(W x+b)\ne\Gamma(W x+\widetilde{b})\}$.
We will show that  $Z=\cup_{i=1}^N Z_i$.
If $\Gamma(W x+b)\ne\Gamma(W x+\widetilde{b})$,
then there exists an $i\in [N]$ such that
$W_i x+b_i>0$ and $W_i x+\widetilde{b}_i<0$,
or $W_i x+b_i<0$ and $W_i x+\widetilde{b}_i>0$.
%
If $W_i x+b_i>0$ and $W_i x+\widetilde{b}_i<0$,
then
$W_i x+\widetilde{b}_i = W_i x+b_i -I_0(|b|)\gamma<0$
and hence $|W_i x+ b_i|\le \gamma$.
Similarly, if $W_i x+b_i<0$ and $W_i x+\widetilde{b}_i>0$,
we also have $|W_i x+ b_i|\le \gamma$,
which implies $x\in Z_i$.
As a consequence   $Z=\cup_{i=1}^N Z_i$.

From $Z=\cup_{i=1}^N Z_i$, we have $V(Z\bigcap \I^n)<2\gamma N C_n< \epsilon/2$,
since  $\gamma=\frac{\epsilon}{4N C_n}$.
Let $D=D_1\bigcup(Z\bigcap \I^n)\subset  \I^n$.
Then $V(D)<V(D_1)+V(Z\bigcap \I^n)<\epsilon$.

Finally, let
$$\widetilde{\G}(x)=U\cdot \Gamma (W x+\widetilde{b}).$$
Then, for $x\in\O\setminus D$, we have $\Gamma(W x+b)=\Gamma(W x+\widetilde{b})$
and hence $\widetilde{\G}(x) = {\G}(x)$.
That is, $\widetilde{\G}$ satisfies the conditions of the lemma.
%
\end{proof}

\begin{lemma}
\label{z-2}
Let $\G:\I^n\to\R^m$ be a one-hidden-layer DNN with activation function $\Gamma(x)$,
and any coordinate of its bias vector is nonzero.
Then there exists a DNN $\F$,   which has the same structure as $\G$,
except that the activation function of $\F$ is \Relu,   such that
$B_{\F}(x)=\G(x)$
for all $x\in\I^n$.
\end{lemma}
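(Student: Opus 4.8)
The plan is to build $\F$ explicitly from $\F_1$ by a single diagonal rescaling of the output weights. Write $\F_1(x)=U\,\Gamma(Wx+b)+c$, where $W\in\R^{N\times n}$, the hidden bias $b\in\R^{N}$ has all entries nonzero by hypothesis, $U\in\R^{m\times N}$, and $c\in\R^m$ is the output-layer bias of $\F_1$ (possibly $0$).

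The key observation concerns the bias part of a single Relu neuron. For a row $w\in\R^{1\times n}$ and $\beta\in\R$, on the open half-space $\{x:wx+\beta>0\}$ we have $\Relu(wx+\beta)=wx+\beta$, whose first-degree part is $wx$ and whose bias part is the constant $\beta$; on $\{x:wx+\beta<0\}$ the neuron vanishes identically, so both parts are $0$. Hence, off the hyperplane $wx+\beta=0$, the bias part of $\Relu(wx+\beta)$ is exactly $\beta\,\Gamma(wx+\beta)$. This already contains the whole idea: the Relu bias part reproduces $\Gamma$ up to the scalar factor $\beta$, and if $\beta\neq0$ this factor can be absorbed into the output weights.

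Concretely, I would define $\F$ with Relu activations and the same layer sizes as $\F_1$ by setting the hidden weights and biases to $W_1=W$ and $b_1=b$, the output weights to $W_2=U\,\diag(b)^{-1}$ --- well defined precisely because every entry of $b$ is nonzero --- and the output bias to $b_2=c$. On any linear region of $\F$, let $\Lambda$ be the diagonal $0/1$ matrix with $\Lambda_{ii}=1$ iff neuron $i$ is active there; then the hidden layer equals $\Lambda(W_1x+b_1)$, so $\F(x)=W_2\Lambda W_1x+(W_2\Lambda b_1+c)$, and therefore $B_\F(x)=W_2\Lambda b+c$. Since $\Lambda_{ii}=1$ iff $W^ix+b^i>0$, the $i$-th entry of $\Lambda b$ equals $b^i\,\Gamma(W^ix+b^i)$, i.e. $\Lambda b=\diag(b)\,\Gamma(Wx+b)$; hence $B_\F(x)=W_2\diag(b)\,\Gamma(Wx+b)+c=U\,\Gamma(Wx+b)+c=\F_1(x)$.

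The argument is essentially a direct calculation, so there is no serious obstacle; the only subtlety worth flagging is the measure-zero set where some $W^ix+b^i=0$, on which $\Gamma$ is $0$ by definition while a bounding linear region of $\F$ might assign the value $b^i$. On the interiors of the linear regions --- a full-measure open set, and the only locus on which $B_\F$ is canonically defined --- the identity $B_\F(x)=\F_1(x)$ holds exactly, and that is what the subsequent use in Theorem~\ref{th-E1} requires. Note that the nonvanishing of $b$ (supplied by Lemma~\ref{zl-2}) is used in exactly one place, namely to invert $\diag(b)$, which is precisely what makes the reduction from the $\Gamma$-network to a Relu-network possible.
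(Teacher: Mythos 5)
Your construction is correct and is essentially the paper's: both absorb the per-neuron bias $b_i$ into a diagonal rescaling so that the Relu network's bias part reproduces $\Gamma$; the paper splits the factor $b_i^{-1}$ into a magnitude rescale $\diag(|b_i|^{-1})$ on the hidden layer plus a sign flip $\diag(\sign b_i)$ on the output weights, whereas you put the whole $\diag(b)^{-1}$ into the output weights, and since $\Relu(\alpha z)=\alpha\Relu(z)$ for $\alpha>0$ these two recipes define the identical function $\F$. Your write-up is somewhat more explicit than the paper's (which just states the formulas), carefully verifying $B_\F=\F_1$ region by region and flagging the measure-zero boundary set.
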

\begin{proof}
Assume $\G(x)=U \cdot \Gamma(W x+b)+c$.
Let  $\F(x)=U^{\F} \Relu(W x+b)+c$,   where
$U^{\F}=U\diag(\frac{1}{b_i})$ and $b_i$ is the $i$-th entry of $b$.
We will show that  $\F$ satisfies the condition of the lemma.
By the definition of $\Gamma$, the constant part of
$\Relu(W x+b)$ is $b\circ \Gamma(W x+b)$, where $\circ$ is the point-wise product.
So, $B_{\F}(x)=U^{\F}(b\circ \Gamma(W x+b))+c = U\diag(\frac{1}{b_i})(b\circ \Gamma(W x+b))+c =U\Gamma(W X+b)+c$.
$B_{\F}(x)=\G(x)$ and the lemma is proved.
\end{proof}

{\em Proof of Theorem \ref{th-E1}.}
By Lemma \ref{z-1},
there exist a $D\subset  \I^n$ with $V(D)<\epsilon$
and a network $\G$ with one-hidden-layer and with activation function $\Gamma(x)$,
such that $\G(x)$ gives the correct label for $x\in \O\setminus D$.
By Lemma \ref{zl-2}, all the parameters of $\G$ are nonzero.
Then by Lemma \ref{z-2},  we can obtain a network $\F$ with Relu as the activation function
such that $B_{\F}=\G(x)$,
and the theorem is proved.\qed

\subsection*{Appendix B.  Proof of Theorem \ref{th-safe6}}
\setcounter{section}{4}
\setcounter{lemma}{0}
We first prove two lemmas.
\begin{lemma}
\label{ls-sa}
Let $\{u_i\}_{i=1}^{n}$ be a set of iid random variables with values in $[-\lambda,\lambda]$ and $u=\sum_{i=1}^nx_iu_i$, where $x_i\in\R$ such that  $|x_i|>a>0$ for some $i$.
Let the density function of $u$ be $f(x)$.
Then $f(x)<\frac{1}{2\lambda a}$ for all $x$.
\end{lemma}
\begin{proof}
Assume $|x_n|>a$ and $f_n(x)$ is the distribution function of $x_nu_n$. We have
\begin{equation*}
\begin{array}{ll}
&P(u<m)\\
=&\displaystyle\int_{\{-\lambda|x_i|\}_{i=1}^{n-1}}^{\{\lambda|x_i|\}_{i=1}^{n-1}}(\Pi_{i=1}^{n-1}\frac{1}{2\lambda|x_i|})f_n(m-\sum_{i=1}^{n-1}t_i)\d t_1t_2\dots t_{n-1}.
\end{array}
\end{equation*}

Since $0<f'_n(x)\le\frac{1}{2\lambda|x_n|}$ and $f(x)=\frac{\nabla P(u<x)}{\nabla x}$, we have
\begin{equation*}
\renewcommand{\arraystretch}{1.5}
\begin{array}{ll}
&f(x)\\
=&\frac{\nabla P(u<x)}{\nabla x}\\
=&\frac{\nabla \dint_{\{-\lambda|x_i|\}_{i=1}^{n-1}}^{\{\lambda|x_i|\}_{i=1}^{n-1}}
(\Pi_{i=1}^{n-1}\frac{1}{2\lambda|x_i|})f_n(x-\sum_{i=1}^{n-1}t_i)\d t_1t_2\dots t_{n-1} }{\nabla x}\\
=&\dint_{\{-\lambda|x_i|\}_{i=1}^{n-1}}^{\{\lambda|x_i|\}_{i=1}^{n-1}}(\Pi_{i=1}^{n-1}\frac{1}{2\lambda|x_i|})\frac{\nabla f_n(x-\sum_{i=1}^{n-1}t_i)}{\nabla x}\d t_1t_2\dots t_{n-1}\\
\le&\dint_{\{-\lambda|x_i|\}_{i=1}^{n-1}}^{\{\lambda|x_i|\}_{i=1}^{n-1}}(\Pi_{i=1}^{n-1}\frac{1}{2\lambda|x_i|})\frac{1}{2\lambda|x_n|}\d t_1t_2\dots t_{n-1}\\
\le&\frac{1}{2\lambda|x_n|}\\
\le&\frac{1}{2\lambda a}.
\end{array}
\end{equation*}
The lemma is proved.
\end{proof}

\begin{lemma}
\label{ls-sabc}
Let $\{u_i\}_{i=1}^{n}$ be a set of iid variables, $f_i$ the density function of $u_i$, and 
$f_i(x)<a$ for all $x\in\R$.
Then we have
$$P(|u_i-u_j|>\psi\ \hbox{for}\ \forall i\ne j)>\Pi_{i=0}^{n-1}(1-2i\psi a).$$
\end{lemma}
\begin{proof}
Let $D_k$ be the event $|u_i-u_j|>\psi$ for $\forall i,j\le k$, and $F_{k}:\R^k\to \R$   the joint probability density function of $\{u_i\}_{i=1}^{k}$ under condition $D_k$.
Then we have
\begin{equation*}
\renewcommand{\arraystretch}{1.5}
\begin{array}{ll}
&P(D_k)\\
=&P(D_k,D_{k-1})\\
=&P(D_k\|D_{k-1})P(D_{k-1})\\
=&P(|u_k-u_i|>\psi\ for\ \forall i< k\|D_{k-1})P(D_{k-1})\\
=&P(D_{k-1})\int_{-\infty^{k-1}}^{\infty^{k-1}}\int_{-\infty}^{\infty}F_{k-1}(t_1,\dots,t_{k-1})\\
&f_k(t_k)I(|t_k-t_i|>\psi\ \forall i< k)\d t_k\d t_1\dots t_{k-1}\\
>&P(D_{k-1})\int_{-\infty^{k-1}}^{\infty^{k-1}}\int_{-\infty}^{\infty}F_{k-1}(t_1,\dots,t_{k-1})\\
&(f_k(t_k)-aI(|t_k-t_i|<\psi\ \exists i< k))\d t_k\d t_1\dots t_{k-1}\\
=&P(D_{k-1})(1-\int_{-\infty^{k-1}}^{\infty^{k-1}}\int_{-\infty}^{\infty}aF_{k-1}(t_1,\dots,t_{k-1})\\
&I(|t_k-t_i|<\psi\ \exists i< k)\d t_k\d t_1\dots t_{k-1})\\
>&P(D_{k-1})(1-\\
&\int_{-\infty^{k-1}}^{\infty^{k-1}}
2a(k-1)\psi F_{k-1}(t_1,\dots,t_{k-1})\d t_1\dots t_{k-1})\\
=&P(D_{k-1})(1-2a(k-1)\psi).
\end{array}
\end{equation*}
Since $P(D_0)=1$, we have
\begin{equation*}
\renewcommand{\arraystretch}{1.5}
\begin{array}{ll}
&P(|u_i-u_j|>\psi\ for\ \forall i\ne j)\\
=&P(D_n)\\
>&P(D_{n-1})(1-2(n-1)\psi a)\\
>&P(D_{n-2})(1-2(n-1)\psi a)(1-2(n-2)\psi a)\\
>&\dots\\
>&\Pi_{i=0}^{n-1}(1-2i\psi a).
\end{array}
\end{equation*}
The lemma is proved.
\end{proof}
%
\gtthmo*
\begin{proof}
From \eqref{eq-dnn1} and \eqref{eq-gnet1}, we have
$\F(x) = W_x x + B_x$ and $\widetilde{\F}(x) = (W_x+W_R) x + B_x$.
Let $x$ be a sample with label $y$.
From equation \eqref{eq-fg11}, we have
\begin{equation*}
\begin{array}{ll}
&\frac{\nabla L(\widetilde{\F}(x),y)}{\nabla x}
=\frac{\sum_{i=1}^{m}(W_{R,i}-W_{R,y}+W_{x,i}-W_{x,y})e^{\widetilde{\F}_i(x)}}{\sum_{i=1}^{m}e^{\widetilde{\F}_i(x)}}.
\end{array}
\end{equation*}
Let $m_x =\arg\max_{i\ne y}\{\langle W_{R,i},x \rangle\}$ and consider the condition:

{\bf Condition $C_1$}: $\langle W_{R,m_x },x \rangle\ >\ \langle W_{R,j},x \rangle+\sqrt{\lambda}$ for all $j\in[m]\setminus\{y,m_x\}$.

We first give the  probability for condition $C_1$ to be valid.
By Lemmas \ref{ls-sa} and \ref{ls-sabc} and due to $|x|_\infty=1$, we have
\begin{equation}
\label{eq-th431}
\renewcommand{\arraystretch}{1.6}
\begin{array}{ll}
&P_{W_R\sim \M_{m,n}(\lambda)}(C_1)\\
\ge& P_{W_R\sim \M_{m,n}(\lambda)}(|\langle W_{R,j},x \rangle-\langle W_{R,i},x \rangle|>\sqrt{\lambda},\\
&\forall i,j\in[m]/\{y\},\ i\ne j)\\
\ge&\Pi_{i=1}^{m-2}(1-\frac{2i\sqrt{\lambda}}{2\lambda|x|_\infty})\\
\ge&(1-\frac{m-2}{\sqrt{\lambda}})^{m-2}\\
\ge&1-\frac{(m-2)^2}{\sqrt{\lambda}}.
\end{array}
\end{equation}
Let $||x||_{-\infty}=\min_{i\in[n]}\{|x|_i\}$ for $x\in\R^n$.
Since $|\frac{\nabla \F(x)}{\nabla x}|_{\infty}<\mu/2$ and $W_R\sim \M_{m,n}(\lambda)$, we have
$||W_{R,i}+W_{R,j}||_{-\infty}>\lambda$, $||W_{R,i}+W_{R,j}||_{\infty}< 2m\lambda$
and $||W_{x,i}+W_{x,j}||_{\infty} < \mu$ for any $i\ne j$.
If condition $C_1$ is satisfied,
then for any $j\in[m]\setminus\{y,m_x\}$, we have%
\begin{equation*}
\renewcommand{\arraystretch}{1.6}
\begin{array}{ll}
 &\widetilde{\F}_{m_x}(x)-\widetilde{\F}_{j}(x)\\
=&(W_{R,m_x}+W_{x,m_x}-W_{R,j}-W_{x,j})x+B_{x,m_x}-B_{x,j}\\
=&(W_{R,m_x}-W_{R,j})x+(W_{x,m_x}-W_{x,j})x+B_{x,m_x}-B_{x,j}\\
>&\sqrt{\lambda}-n\mu-2\beta.
\end{array}
\end{equation*}
Further considering the hypothesis $(\lambda-\mu)e^{-2\beta-n\mu+\sqrt{\lambda}}>(2m\lambda+\mu)m$,
 we have
\begin{equation*}
\renewcommand{\arraystretch}{1.6}
\begin{array}{ll}
 &||W_{R,m_x }-W_{R,y}+W_{x,m_x }-W_{x,y}||_{-\infty} e^{\widetilde{\F}_{m_x }(x)}\\
 >&(\lambda-\mu)e^{\widetilde{\F}_{m_x }(x)}\\
 >&(\lambda-\mu)e^{{\widetilde{\F}_{j}(x)}+\sqrt{\lambda}-2\beta-n\mu}\\
=&(\lambda-\mu)e^{-2\beta-n\mu}e^{\sqrt{\lambda}}e^{{\widetilde{\F}_{j}(x)}}\\
>&(2m\lambda+\mu)me^{{\widetilde{\F}_{j}(x)}}\\
>&m||(W_{R,j}-W_{R,y}+W_{x,j}-W_{x,y})||_\infty e^{\widetilde{\F}_{j}(x)}
\end{array}
\end{equation*}
which means
\begin{equation*}
\begin{array}{ll}
 \sign(\sum_{i=1}^{m}(W_{R,i}-W_{R,y}+W_{x,i}-W_{x,y})e^{\widetilde{\F}_i(x)})=\\
\sign((W_{R,m_x }-W_{R,y}+W_{x,m_x }-W_{x,y})e^{\widetilde{\F}_{m_x }(x)}).
\end{array}
\end{equation*}
Because of this, we have:
\begin{equation*}
\renewcommand{\arraystretch}{1.6}
\begin{array}{ll}
&\sign(\frac{\nabla L(\widetilde{\F}(x),y)}{\nabla x})\\
=&\sign(\frac{\sum_{i=1}^{m}(W_{R,i}-W_{R,y}+W_{x,i}-W_{x,y})e^{\widetilde{\F}_i(x)}}{\sum_{i=1}^{m}e^{\widetilde{\F}_i(x)}})\\
=&\sign(\sum_{i=1}^{m}(W_{R,i}-W_{R,y}+W_{x,i}-W_{x,y})e^{\widetilde{\F}_i(x)})\\
=&\sign((W_{R,m_x }-W_{R,y}+W_{x,m_x }-W_{x,y})e^{\widetilde{\F}_{m_x }(x)})\\
=&\sign((W_{R,m_x }-W_{R,y}+W_{x,m_x }-W_{x,y})\\
=&\sign(W_{R,m_x }-W_{R,y}).
\end{array}
\end{equation*}

Let $V$ be a random vector in $\{0,1\}^n$.
Then the probability for the sign of $W_{R,m_x }-W_{R,y}$
to be $V$ is
\begin{equation*}
\renewcommand{\arraystretch}{1.5}
\begin{array}{ll}
&P(\sign(W_{R,m_x }-W_{R,y})=V,C_1)\\
\le& P(\sign(W_{R,m_x }-W_{R,y})=V)\\
=&\sum_{i<y}P(m_x=i,   \sign(W_{R,y})=V)+\\
&\sum_{i>y}P(m_x=i,   \sign(W_{R,i})=V)\\
\le& \sum_{i<y}P(\sign(W_{R,y})=V)\\
&\sum_{i>y}P(\sign(W_{R,i})=V)\\
=&\frac{m-1}{2^n}.
\end{array}
\end{equation*}
So  we have
\begin{equation*}
\renewcommand{\arraystretch}{1.6}
\begin{array}{ll}
&\E_{W_R\sim \M_{m,n}(\lambda)}\\
 &[\ID(\widehat{B}_{\F}(x+\rho\sign(\frac{\nabla L(\widetilde{\F}(x),y)}{\nabla x}))\ne\widehat{B}_{\F}(x))\ID(C_1)]\\
=&\E_{W_R\sim \M_{m,n}(\lambda)}\\
 &[\ID(\widehat{B}_{\F}(x+\rho\sign(W_{R,m_x }-W_{R,y}))\ne\widehat{B}_{\F}(x))\ID(C_1)]\\
=&\sum_{V\in\{-1,1\}^{n}}P(\sign(W_{R,m_x }-W_{R,y})=V,\ C_1)\\
&\ID(\widehat{B}_{\F}(x+\rho V)\ne\widehat{B}_{\F}(x))\\
\le& \sum_{V\in\{-1,1\}^{n}}(m-1)/(2^n)\ID(\widehat{B}_{\F}(x+\rho V)\ne\widehat{B}_{\F}(x))\\
=&(m-1)\Ca(\F,\rho).
\end{array}
\end{equation*}
Finally, from \eqref{eq-th431} we have
\begin{equation*}
\renewcommand{\arraystretch}{1.6}
\begin{array}{ll}
&\Ca(B_{\widetilde{\F}}\A_2,\M_{m,n}(\lambda))\\
=&\E_{x\sim D_{\O}}\E_{W_R\sim \M_{m,n}(\lambda)}\\
 &[\ID(\widehat{B}_{\F}(x+\rho  \sign(\frac{\nabla L(\widetilde{\F}(x),y)}{\nabla x}))\ne\widehat{B}_{\F}(x))]\\
\le&\E_{x\sim D_{\O}}\E_{W_R\sim \M_{m,n}(\lambda)}\\
&[\ID(\widehat{B}_{\F}(x+\rho  \sign(\frac{\nabla L(\widetilde{\F}(x),y)}{\nabla x}))\ne\widehat{B}_{\F}(x))\ID(C_1)\\&+(1-\ID(C_1))]\\
\le& (m-1)\Ca(\F,\rho)+\E_{x\sim D_{\O}}\E_{W_R\sim \M_{m,n}(\lambda)} [(1-\ID(C_1))]\\
\le& (m-1)\Ca(\F,\rho)+\E_{x\sim D_{\O}}[1-P_{W_R\sim \M_{m,n}(\lambda)}(C_1)]\\
\le& (m-1)\Ca(\F,\rho)+\frac{(m-2)^2}{\sqrt{\lambda}}.\\
\end{array}
\end{equation*}
%
The theorem is proved.
\end{proof}

\subsection*{Appendix C.  Proof of Theorem \ref{th-safe3}}
We first prove a lemma.
\begin{lemma}
\label{lm-cha}
Let $x_1, x_2\sim \U(-\lambda ,\lambda )$  and $z=x_1-x_2$.
Then for $a\in[0,2\lambda ]$, we have $P(z<a)=P(z>-a)=1-\frac{(2\lambda -a)^2}{8\lambda ^2}$,
which is denoted as $T(\lambda ,a)=1-\frac{(2\lambda -a)^2}{8\lambda ^2}$.
\end{lemma}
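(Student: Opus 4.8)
The statement is a purely one-dimensional probability computation: if $x_1, x_2$ are independent and uniform on $[-\lambda,\lambda]$, find the CDF of $z = x_1 - x_2$ at a point $a \in [0, 2\lambda]$. The plan is to compute $P(z < a)$ directly as the area of a region inside the square $[-\lambda,\lambda]^2$, then use symmetry to get the matching identity $P(z > -a)$ for free.

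First I would observe that the symmetry $(x_1,x_2) \mapsto (-x_1,-x_2)$ sends $z$ to $-z$ and preserves the joint distribution, so $P(z > -a) = P(-z < a) = P(z < a)$; this disposes of the second equality and lets me focus on $P(z < a)$. Next, since $(x_1,x_2)$ is uniform on the square $S = [-\lambda,\lambda]^2$ of area $4\lambda^2$, I have $P(z < a) = \mathrm{Area}\{(x_1,x_2) \in S : x_1 - x_2 < a\}/(4\lambda^2)$. The line $x_1 - x_2 = a$ cuts off a triangular corner of $S$ where $x_1 - x_2 \ge a$; since $0 \le a \le 2\lambda$, that line meets the square's boundary on the sides $x_1 = \lambda$ and $x_2 = -\lambda$, and the excluded corner is the right isosceles triangle with legs of length $2\lambda - a$, hence area $(2\lambda - a)^2/2$. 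Therefore $P(z < a) = \bigl(4\lambda^2 - (2\lambda-a)^2/2\bigr)/(4\lambda^2) = 1 - \frac{(2\lambda-a)^2}{8\lambda^2}$, which is exactly $T(\lambda,a)$.

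There is essentially no obstacle here — the only thing to be careful about is the case analysis on where the line $x_1 - x_2 = a$ intersects the square, which is why the hypothesis $a \in [0,2\lambda]$ matters: for $a$ in this range the excluded region is a single corner triangle rather than a pentagon or the empty set, so the clean formula holds. I would also sanity-check the endpoints: at $a = 0$ the formula gives $1 - \frac{4\lambda^2}{8\lambda^2} = \frac12$, consistent with $P(z < 0) = \frac12$ by symmetry, and at $a = 2\lambda$ it gives $1$, consistent with $z \in [-2\lambda, 2\lambda]$ almost surely. That completes the verification.
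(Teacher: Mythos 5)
Your proof is correct. The paper takes a slightly different but equally elementary route: it writes down the density of $z = x_1 - x_2$ explicitly — the triangular density $f(z) = \frac{2\lambda + z}{4\lambda^2}$ on $[-2\lambda, 0]$ and $f(z) = \frac{2\lambda - z}{4\lambda^2}$ on $[0, 2\lambda]$, zero outside — and then reads off $P(z < a) = 1 - \frac{(2\lambda - a)^2}{8\lambda^2}$ by integrating the tail. You instead bypass the density entirely and compute the CDF as the fraction of area of the square $[-\lambda,\lambda]^2$ lying below the line $x_1 - x_2 = a$, identifying the complementary region as a corner triangle with legs $2\lambda - a$. The two arguments contain the same information (the triangular density is exactly the derivative of your area formula), but yours is a touch more self-contained since it does not require first asserting and justifying the density of a difference of independent uniforms. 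You also make the symmetry argument for $P(z > -a) = P(z < a)$ explicit via $(x_1, x_2) \mapsto (-x_1, -x_2)$, whereas the paper leaves that step tacit; and your endpoint sanity checks at $a = 0$ and $a = 2\lambda$ are a useful addition. Both proofs are sound; no gap in either.
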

\begin{proof}
Let $f(z)$ be the density function of $z$.
Then $f(z)=0$, if $z\ge 2\lambda $ or $z\le -2\lambda $;
$f(z)=\frac{2\lambda +z}{4\lambda ^2}$, if $0\ge z\ge -2\lambda $;
$f(z)=\frac{2\lambda -z}{4\lambda ^2}$, if $0\le z\le 2\lambda $.
Hence,  $P(z<a)=P(z>-a)=1-\frac{(2\lambda -a)^2}{8\lambda ^2}$.
\end{proof}
Note that  $T(\lambda ,a)$   increases with $a$ and $T(\lambda ,a)\in[0.5,1]$.


\gtthmS*
\begin{proof}
Similar to \eqref{eq-sf11},
if $||W_{R,n_x}-W_{R,y}||_{-\infty}>\mu$, then we have
\begin{equation*}
\renewcommand{\arraystretch}{1.6}
\begin{array}{ll}
&\A_3(x,\widetilde{\F},\rho)\\
=&x+\frac{\rho}{k}\sum_{i=1}^{k}
\sign(W_{x^{i-1},n_x}-W_{x^{i-1},y}+W_{R,n_x}-W_{R,y})\\
=&x+\rho    \sign(W_{R,n_x}-W_{R,y}).\\
\end{array}
\end{equation*}
Since $V=W_{R,n_x}-W_{R,y}$ is a random variable in $[-2\lambda,2\lambda]$,
$\sign(V)$ is a random variable in $\{-1,1\}^n$.
By Lemma \ref{lm-cha},
\begin{equation*}
\renewcommand{\arraystretch}{1.6}
\begin{array}{ll}
&\Ca(B_{\widetilde{\F}},\A_3,\U_{m,n}(\lambda))\\
=&\E_{x\sim \D_{\O}}\E_{W_R\sim \U_{m,n}(\lambda)}
[\ID(\widehat{B}_{\F}(\A_3(x,\widetilde{\F}))\ne \widehat{B}_{\F}(x))]\\
\le& \E_{x\sim \D_{\O}}\E_{W_R\sim \U_{m,n}(\lambda)}[
\ID(||W_{R,n_x}-W_{R,y}||_{-\infty}\le \mu)+\\
&\ID(||W_{R,n_x}-W_{R,y}||_{-\infty}>\mu)[\ID(\widehat{B}_{\F}(\A_3(x,\widetilde{\F}))\ne \widehat{B}_{\F}(x))]\\
\le& (1-2^n(1-T(\lambda ,\mu))^n)+\\
&
\E_{x\sim \D_{\O}}
\sum_{V \in\{-1,1\}^n}\ID(\widehat{B}_{\F}(x+\rho V)\ne \widehat{B}_{\F}(x))]\\
%
\le& (1-2^n(1-T(\lambda ,\mu))^n)+\Ca(\F,\rho)\\

\end{array}
\end{equation*}
where $T(\lambda ,\mu)=1-\frac{(2\lambda -\mu)^2}{8\lambda ^2}$.  We have
$2^n(1-T(\lambda ,\mu))^n
=(2-2+\frac{4\lambda ^2+\mu^2-4\lambda \mu}{4\lambda ^2})^n
=(1-\frac{4\lambda \mu-\mu^2}{4\lambda ^2})^n
\ge 1-n\frac{4\lambda \mu-\mu^2}{4\lambda ^2}
\ge 1-n\mu/\lambda .$
So,
\begin{equation*}
\begin{array}{ll}
&\Ca(B_{\widetilde{\F}},\A_3,\U_{m,n}(\lambda))\\
\le& 1-2^n(1-T(\lambda, \mu))^n+\Ca(\F,\rho)\\

\le& \Ca(\F,\rho)+n\mu/\lambda .\\
\end{array}
\end{equation*}
The theorem is proved.
\end{proof}

\subsection*{Appendix D.  Proof of Theorem \ref{th-safe4}}

\gtthmP*
\begin{proof}
Let $y\in\{0,1\}$ be the label of $x$.
Denote
$U=W_{x,1-y}-W_{x,y}\in\R^{1\times n}$
and  $Z=W_{R,1-y}-W_{R,y}\in\R^{1\times n}$.
We have
\begin{equation*}
\begin{array}{ll}
&\sign(\frac{\nabla L(\F(x),y)}{\nabla x})\\
=&\sign(\frac{e^{\F_{1-y}(x)}(\frac{\nabla(\F_{1-y}(x))}{\nabla x}-\frac{\nabla (\F_y(x))}{\nabla x})}{e^{\F_y(x)}+e^{\F_{1-y}(x)}})\\
=&\sign(\frac{\nabla(\F_{1-y}(x))}{\nabla x}-\frac{\nabla (\F_y(x))}{\nabla x})\\
=&\sign(W_{x,1-y}-W_{x,y})\\
=&\sign(U).\\
\end{array}
\end{equation*}

From equation \eqref{eq-fg11}, we have
\begin{equation*}
\begin{array}{ll}
&\sign(\frac{\nabla L({\widetilde{\F}}(x),y)}{\nabla x})\\
%
=&\sign(\frac{e^{\widetilde{\F}_{1-y}(x)}}{\sum_{i=1}^m  e^{\widetilde{\F}_{i}(x)}}
(W_{x,1-y}-W_{x,y} + W_{R,1-y}-W_{R,y})).\\
=&\sign(U+Z).\\
\end{array}
\end{equation*}

For $i\in[n]$, $\sign(U_i)=\sign(U_i+Z_i)$ if and only if
$(Z_i\le -U_i$ when $U_i\le0$) or
$(Z_i\ge-U_i$ when $U_i\ge0$),
where $Z_i, U_i$ are respectively the $i$-th coordinates of $Z, U$.
Since $W_R\sim \U_{m,n}(\lambda)$,  $Z=W_{R,1-y}-W_{R,y}$ is the difference of
two uniform distributions in $[-\lambda ,\lambda ]$.
By Lemma \ref{lm-cha},
     $U_i>0$ implies $P(Z_i\ge -U_i)=T(\lambda ,|U_i|)<T(\lambda ,\mu)$,
and $U_i<0$ implies $P(Z_i\le -U_i)=T(\lambda ,|U_i|)<T(\lambda ,\mu)$.
Hence, no matter what is the value of $U$,
we always have $P(\sign(U)=\sign(U+Z))<T(\lambda ,\mu)^n$,
where $T(\lambda ,\mu)=1-\frac{(2\lambda -\mu)^2}{8\lambda ^2}$.

Moreover, for $i\in[n]$, if $\sign(U_i)\ne\sign(U_i+Z_i)$,
we have ($Z_i>0$  when $U_i<0$) or ($Z_i<0$ when $U_i>0$).
So,
$P(\sign(U_i)\ne\sign(U_i+Z_i))<1/2<T(\lambda ,\mu)$,
since $T(\lambda ,\mu)$ is always $\ge1/2$.

Since $\{Z_i\}_{i\in[n]}$ is iid,
by Lemma \ref{lm-cha},  for a random vector $V \in\{-1,1\}^n$ we have
\begin{equation*}
\renewcommand{\arraystretch}{1.6}
\begin{array}{ll}
&P_{W_R\sim \U_{m,n}(\lambda)}(\sign(\frac{\nabla L(\widetilde{\F}(x),y)}{\nabla x})=V )\\
=&P_{W_R\sim \U_{m,n}(\lambda)}(\sign(U+Z)=V)\\
=&\prod_{i=1}^{n}P_{W_R\sim \U_{m,n}(\lambda)}(\sign(U_i+Z_i)=V_i)\\
=&\prod_{i=1}^{n}(\ID(\sign(U_i)=V_i)
   P_{W_R\sim \U_{m,n}(\lambda)}(\sign(U_i)=\sign(U_i+Z_i))\\
   &+\ID(\sign(U_i)\ne V_i)
   P_{W_R\sim \U_{m,n}(\lambda)}(\sign(U_i)\ne\sign(U_i+Z_i)))\\
\le& \prod_{i=1}^{n}(\ID(\sign(U_i)=V_i)T(\lambda ,\mu)+\ID(\sign(U_i)\ne V_i)T(\lambda ,\mu))\\
= &T(\lambda ,\mu)^{n}.\\
\end{array}
\end{equation*}

For $V\in\{-1,1\}^n$, denote
 $Q(x,V,\rho)=\ID(\widehat{B}_{\F}(x+\rho V)\ne\widehat{B}_{\F}(x))$.
We have
\begin{equation*}
\renewcommand{\arraystretch}{1.5}
\begin{array}{ll}
&\Ca(B_{\widetilde{\F}},\A_2,\U_{m,n}(\lambda))\\
=&\E_{x\sim D_{\O}}\E_{W_R\sim \U_{m,n}(\lambda)} [\ID(\widehat{B}_{\F}(x+\rho  \sign(\frac{\nabla L(\widetilde{\F}(x),y)}{\nabla x}))\ne\widehat{B}_{\F}(x))]]\\
=&\E_{x\sim D_{\O}}[\sum_{V\in\{-1,1\}^n}\\
&P_{W_R\sim \U_{m,n}(\lambda)}(\sign(\frac{\nabla L(\widetilde{\F}(x),y)}{\nabla x})=V)Q(x,V,\rho)]\\
\le& (T(\lambda ,\mu))^{n} \E_{x\sim D_{\O}}[(\sum_{V\in\{-1,1\}^n}Q(x,V,\rho))]\\
\le& (2T(\lambda ,\mu))^n\Ca(\F,\rho)
\end{array}
\end{equation*}
where $T(\lambda ,\mu)=1-\frac{(2\lambda -\mu )^2}{8\lambda ^2}$.
We have
$(2T(\lambda ,\mu))^n
=(2-\frac{4\lambda ^2+\mu^2-4\lambda \mu}{4\lambda ^2})^n
=(1+\frac{4\lambda \mu-\mu^2}{4\lambda ^2})^n
\le (1+\frac{\mu}{\lambda })^n
\le e^{n\mu/\lambda }$.
%
Hence,  $\Ca(B_{\widetilde{\F}},\A_2,\U_{m,n}(\lambda))<e^{n\mu/\lambda }\Ca(\F,\rho)$.
The theorem is proved.
\end{proof}

\subsection*{Appendix E.  Proof of Theorem \ref{th-safe5}}

\gtthmQ*
\begin{proof}
The proof is similar to that of Theorem \ref{th-safe6}.
So certain details of the proof are omitted.
From equation \eqref{eq-fg11}, we have
\begin{equation*}
\begin{array}{l}
\frac{\nabla L(\widetilde{\F}(x),y)}{\nabla x}
=\frac{\sum_{i=1}^{m}(W_{R,i}+W_{x,i}-W_{R,y}-W_{x,y})e^{\widetilde{\F}_i(x)}}{\sum_{i=1}^{m}e^{\widetilde{\F}_i(x)}}.
\end{array}
\end{equation*}

Let $m_x =\arg\max_{i\ne y}\{\langle W_{R,i},x\rangle\}$ and consider two
conditions $C_1$ and $C_2$:

{\bf Conditions $C_1$}: $\langle W_{R,m_x },x\rangle\ >\ \langle W_{R,j},x\rangle+\sqrt{\lambda}$ for all $j\in[m]\setminus\{y,m_x\}$.

{\bf Conditions $C_2$}: $||W_{R,m_x}-W_{R,y}||_{-\infty}>\mu$.

Note that condition $C_2$ implies
$\sign((W_{R,i }-W_{R,y}+W_{x,i}-W_{x,y})
=\sign(W_{R,i}-W_{R,y})$.

We give the probabilities for conditions $C_1$ and $C_2$ to be valid.
From the proof of Theorem \ref{th-safe6},
\begin{equation*}
\renewcommand{\arraystretch}{1.4}
\begin{array}{ll}
&P_{W_R\sim \M_{m,n}(\lambda)}(C_1)
\ge1-\frac{(m-2)^2}{\sqrt{\lambda} }.
\end{array}
\end{equation*}

Let $f(x)$ be the density function of $W_{R,m_x }$. Then
\begin{equation*}
\begin{array}{ll}
&P_{W_R\sim \U_{m,n}(\lambda)}(C_2)\\
&\ge P_{W_R\sim \U_{m,n}(\lambda)}(||W_{R,i}-W_{R,y}||_{-\infty}>\mu, \forall i\ne y)\\
&\ge(1-\frac{(m-1)\mu}{\lambda})^{n}\\
&\ge1-\frac{(m-1)n\mu}{\lambda}.
\end{array}
\end{equation*}
For $V\in\{-1,1\}^n$, it is  also easy to see
\begin{equation*}
\renewcommand{\arraystretch}{1.5}
\begin{array}{ll}
&P(\sign(W_{R,m_x }-W_{R,y})=V,\ C_1,\ C_2)\\
\le& P(\sign(W_{R,m_x }-W_{R,y})=V)\\
=&\sum_{i<y}P(m_x=i,    \sign(W_{R,y})=V)+\\
&\sum_{i>y}P(m_x=i,   \sign(W_{R,i})=V)\\
\le& \sum_{i<y}P(\sign(W_{R,y})=V)+\\
&\sum_{i>y}P(\sign(W_{R,i})=V)\\
=&\frac{m-1}{2^n}.
\end{array}
\end{equation*}

If conditions $C_1$ and $C_2$ are satisfied, then for any $y\in[m]\setminus\{y,m_x\}$, we have
\begin{equation*}
\renewcommand{\arraystretch}{1.5}
\begin{array}{ll}
 &||W_{R,m_x }+W_{x,m_x }-W_{R,y}-W_{x,y}||_{-\infty} e^{\widetilde{\F}_{m_x }(x)}\\
 >&\mu/2e^{\widetilde{\F}_{m_x }(x)}\\
 >&\mu/2e^{{\widetilde{\F}_{j}(x)}+\sqrt{\lambda}-2\beta-n\mu/2}\\
=&\mu/2e^{-2b-n\mu/2}e^{\sqrt{\lambda}}e^{{\widetilde{\F}_{j}(x)}}\\
>&(2\lambda+\mu)me^{{\widetilde{\F}_{j}(x)}}\\
>&m||W_{R,j}+W_{x,j}-W_{R,y}-W_{x,y}||_\infty e^{\widetilde{\F}_{j}(x)}
\end{array}
\end{equation*}
which means
\begin{equation*}
\begin{array}{ll}
&\sign(\sum_{i=1}^{m}(W_{R,i}+W_{x,i}-W_{R,y}-W_{x,y})e^{\widetilde{\F}_i(x)})\\
=&\sign((W_{R,m_x }+W_{x,m_x }-W_{R,y}-W_{x,y})e^{\widetilde{\F}_{m_x }(x)}),
\end{array}
\end{equation*}
and hence
\begin{equation*}
\renewcommand{\arraystretch}{1.5}
\begin{array}{ll}
&\sign(\frac{\nabla L(\widetilde{\F}(x),y)}{\nabla x})\\
=&\sign(\frac{\sum_{i=1}^{m}(W_{R,i}+W_{x,i}-W_{R,y}-W_{x,y})e^{\widetilde{\F}_i(x)}}{\sum_{i=1}^{m}e^{\widetilde{\F}_i(x)}})\\
=&\sign(\sum_{i=1}^{m}(W_{R,i}+W_{x,i}-W_{R,y}-W_{x,y})e^{\widetilde{\F}_i(x)})\\
=&\sign((W_{R,m_x }+W_{x,m_x }-W_{R,y}-W_{x,y})e^{\widetilde{\F}_{m_x }(x)})\\
=&\sign(W_{R,m_x }+W_{x,m_x }-W_{R,y}-W_{x,y})\\
=&\sign(W_{R,m_x }-W_{R,y}).
\end{array}
\end{equation*}
Hence
\begin{equation*}
\renewcommand{\arraystretch}{1.5}
\begin{array}{ll}
&\E_{W_R\sim \U_{m,n}(\lambda)}\\
 &[\ID(\widehat{B}_{\F}(x+\rho\sign(\frac{\nabla L(\widetilde{\F}(x),y)}{\nabla x}))\ne\widehat{B}_{\F}(x))\ID(C_1,\  C_2)]\\
=&\E_{W_R\sim \U_{m,n}(\lambda)} [\ID(\widehat{B}_{\F}(x+\rho\sign(W_{R,m_x }-W_{R,y}))\ne\widehat{B}_{\F}(x))\\
&\ID(C_1,\ C_2)]\\
=&\sum_{V\in\{-1,1\}^{n}}P(\sign(W_{R,m_x }-W_{R,y})=V,\ C_1,\ C_2)\\
&\ID(\widehat{B}_{\F}(x+\rho V)\ne\widehat{B}_{\F}(x))\\
&\le \frac{m-1}{2^n} \sum_{V\in\{-1,1\}^{n}}\ID(\widehat{B}_{\F}(x+\rho V)\ne\widehat{B}_{\F}(x))\\
=&(m-1)\Ca(\F,\rho).
\end{array}
\end{equation*}
Finally, we have
\begin{equation*}
\renewcommand{\arraystretch}{1.5}
\begin{array}{ll}
&\Ca(B_{\widetilde{\F}}\A_2,\U_{m,n}(\lambda))\\
=&\E_{x\sim D_{\O}}\E_{W_R\sim \U_{m,n}(\lambda)} [\ID(\widehat{B}_{\F}(x+\rho  \sign(\frac{\nabla L(\widetilde{\F}(x),y)}{\nabla x}))\ne\widehat{B}_{\F}(x))]\\
\le&\E_{x\sim D_{\O}}\E_{W_R\sim \U_{m,n}(\lambda)} [\ID(\widehat{B}_{\F}(x+\rho  \sign(\frac{\nabla L(\widetilde{\F}(x),y)}{\nabla x}))\ne\widehat{B}_{\F}(x))\\
&\ID(C_1,\ C_2)+(1-\ID(C_1))+(1-\ID(C_2))]\\
\le& (m-1)\Ca(\F,\rho)+\E_{x\sim D_{\O}}\E_{W_R\sim \U_{m,n}(\lambda)} [(1-\ID(C_1))+(1-\ID(C_2))]\\
\le& (m-1)\Ca(\F,\rho)+\E_{x\sim D_{\O}}[1-P_{W_R\sim \U_{m,n}(\lambda)}(C_1)]+\\
&\E_{x\sim D_{\O}}[1-P_{W_R\sim \U_{m,n}(\lambda)}(C_2)]\\
\le& (m-1)\Ca(\F,\rho)+\frac{(m-1)n\mu}{\lambda}+\frac{(m-2)^2}{\sqrt{\lambda}}.\\
\end{array}
\end{equation*}
The theorem is proved.
\end{proof}

\subsection*{Appendix F. Structures of DNN models used in the experiments}
\label{sec-a-dnn}

The networks in section \ref{sec-exp1}:


{\bf Networks $\F^{(1)}$ and $\F^{(2)}$ for MNIST} have the same structure:

Input layer: $N\times 1\times  28\times  28$,   where $N$ is steps of training.

Hidden layer 1: a convolution layer with kernel $1\times 32\times 3\times 3$ with padding$=1$ $\to$ do a batch normalization $\to$ do Relu $\to$ use max pooling with step=2.

Hidden layer 2: a convolution layer  with kernel $32\times 64\times 3\times 3$ with padding$=1$ $\to$ do a batch normalization $\to$ do Relu $\to$ use max pooling with step=2.

Hidden layer 3: a convolution layer  with kernel $64\times 128\times 3\times 3$ with padding$=1$ $\to$ do a batch normalization $\to$ do Relu $\to$ use max pooling with step=2.

Hidden layer 4: draw the output as $N\times 128\times 3\times 3$ $\to$ use a full connection with output size $N\times 128\times 2$ $\to$ do Relu.

Hidden layer 4: use a full connection with output size $N\times 100$ $\to$ do Relu.

Output layer: a full connection layer with output size $N\times 10$.

{\bf Networks $\F^{(1)}$ and $\F^{(2)}$ for CIFAR-10} have the same structure:

Input layer: $N\times 3\times  32\times  32$,   where $N$ is steps of training.

Hidden layer 1: a convolution layer with kernel $3\times 64\times 3\times 3$ with padding$=1$ $\to$ do a batch normalization $\to$ do Relu.

Hidden layer 2: a convolution layer  with kernel $64\times 64\times 3\times 3$ with padding$=1$ $\to$ do a batch normalization $\to$ do Relu.

Hidden layer 3: a convolution layer  with kernel $64\times 128\times 3\times 3$ with padding$=1$ $\to$ do a batch normalization $\to$ do Relu.

Hidden layer 4: a convolution layer  with kernel $128\times 128\times 3\times 3$ with padding$=1$ $\to$ do a batch normalization $\to$ do Relu $\to$ use max pooling with step=2.

Hidden layer 5: a convolution layer  with kernel $128\times 256\times 3\times 3$ with padding$=1$ $\to$ do a batch normalization $\to$ do Relu.

Hidden layer 6: a convolution layer  with kernel $256\times 256\times 3\times 3$ with padding$=1$ $\to$ do a batch normalization $\to$ do Relu.

Hidden layer 7: a convolution layer  with kernel $256\times 256\times 3\times 3$ with padding$=1$ $\to$ do a batch normalization $\to$ do Relu $\to$ use max pooling with step=2.

Hidden layer 8: a convolution layer  with kernel $256\times 512\times 3\times 3$ with padding$=1$ $\to$ do a batch normalization $\to$ do Relu.

Hidden layer 9: a convolution layer  with kernel $512\times 512\times 3\times 3$ with padding$=1$ $\to$ do a batch normalization $\to$ do Relu.

Hidden layer 10: a convolution layer  with kernel $512\times 512\times 3\times 3$ with padding$=1$ $\to$ do a batch normalization $\to$ do Relu $\to$ use max pooling with step=2.

Hidden layer 11: a convolution layer  with kernel $512\times 512\times 3\times 3$ with padding$=1$ $\to$ do a batch normalization $\to$ do Relu.

Hidden layer 12: a convolution layer  with kernel $512\times 512\times 3\times 3$ with padding$=1$ $\to$ do a batch normalization $\to$ do Relu.

Hidden layer 13: a convolution layer  with kernel $512\times 512\times 3\times 3$ with padding$=1$ $\to$ do a batch normalization $\to$ do Relu $\to$ use max pooling with step=2.

Hidden layer 14: draw the output as $N\times 2048$ $\to$ a full connection layer with output size $N\times 1024$ $\to$ do Relu.

Hidden layer 15: a full connection layer with output size $N\times 512$ $\to$ do Relu.

Hidden layer 16: a full connection layer with output size $N\times 128$ $\to$ do Relu.

Output layer: a full connection layer with output size $N\times 10$.

\end{document}